\newtheorem{theorem}{Theorem}
\newtheorem{lemma}[theorem]{Lemma}
\newtheorem{definition}[theorem]{Definition}
\newtheorem{algorithm}[theorem]{Algorithm}
\newtheorem{structure}[theorem]{Data-Structure}
\newtheorem{operation}[theorem]{Operation}
\newcommand{\JTvertex}[1]{\mathcal{V}(#1)}
\newcommand{\JT}{\mathcal{J}}
\newcommand{\JTroot}{R}
\newcommand{\JTpar}[1]{{\uparrow}(#1)}
\newcommand{\JTmess}[2]{M_{#1\rightarrow#2}}
\newcommand{\JTfactor}[1]{\mathcal{F}(#1)}
\newcommand{\JTmarg}[2]{{#1}^{{\bigtriangledown}#2}}
\newcommand{\JTneighbour}[1]{\mathcal{N}(#1)}
\newcommand{\JTresult}[1]{\rho_{#1}}
\newcommand{\JTpower}[1]{\mathcal{P}(#1)}
\newcommand{\JTnatup}[1]{\mathbb{N}_{#1}}
\newcommand{\JTpots}[1]{\mathcal{T}(#1)}
\newcommand{\JTexp}[1]{2^{#1}}
\newcommand{\JTunderlying}[1]{\sigma(#1)}
\newcommand{\JTdegree}[1]{\operatorname{deg}(#1)}
\newcommand{\JTdual}[1]{\##1}
\newcommand{\JTmdual}[1]{\%#1}
\newcommand{\JTnumb}[2]{\kappa(#1,#2)}
\newcommand{\JTperm}[2]{#2^{(-1)^{#1}}}
\newcommand{\JTedge}[1]{\mathcal{E}(#1)}
\newcommand{\JTchildren}[1]{{\downarrow}(#1)}
\newcommand{\JTcontains}[1]{{#1}^{+}}
\newcommand{\JTsp}[2]{{#1}^{\bullet{#2}}}
\newcommand{\JTbase}{S}
\newcommand{\JTone}[1]{\boldsymbol{1}_{#1}}
\newcommand{\JTstree}[1]{\mathfrak{B}(#1)}
\newcommand{\JTitreeone}[1]{\mathfrak{T}(#1)}
\newcommand{\JTitreetwo}[2]{\mathfrak{T}(#1, #2)}
\newcommand{\JTlabel}[1]{\phi(#1)}
\newcommand{\JTvalue}[1]{\psi(#1)}
\newcommand{\JTlchild}[1]{{\triangleleft}(#1)}
\newcommand{\JTrchild}[1]{{\triangleright}(#1)}
\newcommand{\JTleafset}[1]{\tau(#1)}
\newcommand{\JTancestor}[1]{{\Uparrow}(#1)}
\newcommand{\JTdescendants}[1]{{\Downarrow}(#1)}
\newcommand{\JTstack}{\mathfrak{Z}}
\newcommand{\JTnull}{0}
\newcommand{\JTarray}[1]{{\mathcal{A}}(#1)}
\newcommand{\JTL}{\mathcal{L}}
\newcommand{\JTmultiset}{\mathfrak{X}}
\title{A Time and Space Efficient Junction Tree Architecture}
\author{Stephen Pasteris\\
Department of Computer Science\\
University College London\\
London
WC1E 6BT, England, UK\\
s.pasteris@cs.ucl.ac.uk}
\begin{document}
\maketitle

\begin{abstract}
The junction tree algorithm is a way of computing marginals of boolean multivariate probability distributions that factorise over sets of random variables. 
The junction tree algorithm first constructs a tree called a junction tree who's vertices are sets of random variables. The algorithm then performs a generalised version of belief propagation on the junction tree.
The Shafer-Shenoy and Hugin architectures are two 
ways to perform this belief propagation that tradeoff time and space complexities in different ways: Hugin propagation is at least as fast as Shafer-Shenoy propagation and in the cases that we have large vertices of high degree is significantly faster. However, this speed increase comes at the cost of an increased space complexity. This paper first introduces a simple novel architecture, ARCH-1, which has the best of both worlds: the speed of Hugin propagation and the low space requirements of Shafer-Shenoy propagation. A more complicated novel architecture, ARCH-2, is then introduced which has, up to a factor only linear in the maximum cardinality of any vertex, time and space complexities at least as good as ARCH-1 and in the cases that we have large vertices of high degree is significantly faster than ARCH-1. 
\end{abstract}

\section{Introduction}
The junction tree algorithm is a popular tool for the simultaneous computation of all marginals of a multivariate probability distribution stored in a factored form. In this paper we consider the case in which the random variables are boolean. The junction tree algorithm is a generalisation of belief propagation \cite{JTref2} performed on a tree (called a junction tree) who's vertices are sets of random variables. The Shafer-Shenoy \cite{JTref3} \cite{JTref1} and Hugin \cite{JTref10} \cite{JTref1} architectures are two variations of the junction tree algorithm that trade off time and space complexities in different ways: Hugin propagation is faster than Shafer-Shenoy propagation but at the cost of a greater space complexity. Large vertices of high degree cause much inefficiency in both these architectures (especially in that of Shafer-Shenoy) and it is the purpose of this paper to introduce novel architectures that perform better in these cases. In order to tackle the problem of high-degree vertices, an algorithm was given in \cite{JTref6} for constructing a binary junction tree 
 on which Shafer-Shenoy propagation can then be performed. This method was shown empirically to be faster than Hugin propagation (on a generic junction tree constructed in a certain way) in \cite{JTref4}. The drawback of this method, however, is that it can require dramatically more space than Shafer-Sheony propagation on a generic junction tree due to the maximum cardinality of the intersection of two neighbouring vertices being large. It should also be noted that in \cite{JTref5} an architecture was given that eliminated the redundant computations (caused by high-degree vertices) of Shafer-Shenoy propagation. Again though, this architecture can have a dramatically increased space complexity over that of Shafer-Shenoy propagation. In comparison, the architectures introduced in this paper tackle the problem of high degree vertices whilst retaining the low space complexity of Shafer-Shenoy propagation (on a generic junction tree). Two novel architectures are introduced in this paper: the first, ARCH-1, achieves the speed (up to a constant factor) of Hugin propagation and has the low space requirements of Shafer-Shenoy propagation. ARCH-1 is very simple and serves as a warm up to a more complicated architecture, ARCH-2, which (almost) has space and time complexities at least as good as ARCH-1 and in the cases in which we have large vertices of high degree is significantly faster than ARCH-1/Hugin. In the cases in which we have a large enough (relative to the rest of the junction tree)
 vertex who's degree is exponential (to some base greater than one) in its cardinality then ARCH-2 has a polynomial saving in the time complexity over that of ARCH-1/Hugin: i.e. there exists $s<1$ such that a time of $\Theta(t)$ (for ARCH-1/Hugin) becomes a time of $\mathcal{O}(t^s)$ (for ARCH-2). The saving in time complexity in going from Shafer-Shenoy to Hugin/ARCH-1 is similar.  \\A more detailed description of the results of this paper is given in section \ref{JTsection3} after the preliminary definitions have been introduced and the junction tree algorithm has been described.
 \newline
 \newline
In this paper we assume that all basic operations such as arithmetic operations and memory reads/writes take constant time. To ease the reader's understanding, the algorithms given in sections \ref{JTsection4} though \ref{JTsection6} are sketches: to achieve the stated time complexities we must be able to find and store variables in constant amortised time and space. The exact implementations that give the stated time and space complexities are given in Section \ref{JTsection8}.
\newline
\newline
This paper is structured as follows: In Section \ref{JTsection2} we give the preliminary definitions required by the paper. In Section \ref{JTsection3} we give an overview of the junction tree algorithm, a detailed overview of the results of the paper and some technicalities relating to time/space complexities. In Section \ref{JTsection4} we describe the Shafer-Shenoy and Hugin architectures and analyse their complexities. In Section \ref{JTsection5} we describe the architecture ARCH-1 and analyse its complexity. In Section \ref{JTsection6} we describe the architecture ARCH-2 and analyse its complexity. In Section \ref{JTsection7} we describe how to modify ARCH-2 such that it can deal with zeros. In Section \ref{JTsection8} we give the details of how to implement the algorithms introduced in this paper.

\noindent\hfil\rule{1\textwidth}{.4pt}\hfil

\section{Preliminaries}
\label{JTsection2}
In this section we define the notation and concepts used in this paper except for that required exclusively for the implementation details of the algorithms which are defined in Section \ref{JTsection8}. Also, the notation $\JT$, $S$, $\JTfactor{C}$ and $\JTmess{H}{C}$, as well as the notion of ``sending" and ``receiving" messages, is defined in Algorithm \ref{JTA}.

\noindent\hfil\rule{1\textwidth}{.4pt}\hfil
\subsection{Basic Notation}
The symbol $:=$ is used for definition: e.g. $x:=y$ means ``$x$ is defined to be equal to $y$". Given $a\in\mathbb{N}$ we define $\JTnatup{a}$ to be equal to the set of the first $a$ natural numbers: i.e. the set $\{1, 2, 3, ..., (a-1), a\}$. Given a set $X$ we define $\JTpower{X}$ to be the \textit{power-set} of $X$: that is, the set of all subsets of $X$.
\newline
\newline
We now define the pseudo-code used in this paper: The left arrow, $\leftarrow$, denotes assignment: e.g. $a\leftarrow b$ indicates that the value $b$ is computed and then assigned to the variable $a$. Function names are written in bold with the input coming in brackets after the name. When the assignment symbol, $\leftarrow$, has a function on its right hand side it indicates that the function is run and the output of the function is assigned to the variable on the left hand side: e.g. $a\leftarrow{\bf function}(b)$ indicates that the function {\bf function} is run with input $b$ and its output is assigned to variable $a$. When the word ``return" appears in the pseudo-code for a function it indicates that the function terminates and outputs the object coming after the word ``return".

\noindent\hfil\rule{1\textwidth}{.4pt}\hfil

\subsection{Potentials}
A \textit{binary labelling} of a set $X$ is a map from $X$ into $\{0,1\}$.
\newline
\newline
A \textit{potential} on a set $X$ is a map from $\JTpower{X}$ into $\mathbb{R}^+$.\\ 
Given a potential $\Psi$ on a set $X$ we define $\JTunderlying{\Psi}:=X$.\\ Given a set $X$ we define $\JTpots{X}$ to be the set of all possible potentials on $X$.\\
Given a set $X$ we define $\JTone{X}$ to be the potential in $\JTpots{X}$ that satisfies $\JTone{X}(Z):=1$  for all $Z\in\JTpower{X}$.
\newline
\newline
Note that a potential on a set $X$ is equivalent to a map from all possible binary labellings of $X$ into the positive reals (which is the usual definition of a potential). The equivalence is seen by noting that there is a bijecitive mapping from $\JTpower{X}$ into the set of all possible binary labellings of $X$ where a subset $Y$ of $X$ maps to the labelling $\mu_Y$ of $X$ given by $\mu_Y(x):=1$ for all $x\in Y$ and $\mu_Y(x):=0$ for all $x\in X\setminus Y$. The operations in this paper are easier to describe when the domain of a potential is a power-set, which is why we define potentials in this way.

\noindent\hfil\rule{1\textwidth}{.4pt}\hfil

Given a potential $\Psi$ on a set $X$ and a subset $Y\subseteq X$ we define the \textit{$Y$-marginal}, $\JTmarg{\Psi}{Y}$, of $\Psi$ as the potential in $\JTpots{Y}$ that satisfies, for all $Z\in\JTpower{Y}$:
\begin{equation}
\JTmarg{\Psi}{Y}(Z):=\sum_{U\in\JTpower{X}:U\cap Y=Z}\Psi(U)
\end{equation}
Note that, by above, $\Psi$ may be equivalent to a probability distribution on binary labellings of $X$. If this is the case then $\JTmarg{\Psi}{Y}$ is equivalent to the marginalisation of that probability distribution onto $Y$.

\noindent\hfil\rule{1\textwidth}{.4pt}\hfil

Given sets $X$ and $Y$ and potentials $\Psi\in\JTpots{X}$ and $\Phi\in\JTpots{Y}$ we define the \textit{product}, $\Psi\Phi$, of $\Psi$ and $\Phi$ as the potential in $\JTpots{X\cup Y}$ that satisfies, for all $Z\in\JTpower{X\cup Y}$:
\begin{equation}
[\Psi\Phi](Z):=\Psi(Z\cap X)\Phi(Z\cap Y)
\end{equation}
We represent the product of multiple potentials by the $\prod$ symbol, as in the multiplication of numbers.

\noindent\hfil\rule{1\textwidth}{.4pt}\hfil

Given a set $X$ and potentials $\Psi, \Phi\in\JTpots{X}$ we define the \textit{quotient}, $\Psi/\Phi$, of $\Psi$ and $\Phi$ as the potential in $\JTpots{X}$ that satisfies, for all $Z\in\JTpower{X}$:
\begin{equation}
[\Psi/\Phi](Z):=\Psi(Z)/\Phi(Z)
\end{equation}

\noindent\hfil\rule{1\textwidth}{.4pt}\hfil

The reason for the low space complexity of ARCH-2 is that, given a set $X$ and a potential $\Phi\in\JTpots{X}$, we may not need to store the value of $\Phi(Y)$ for every $Y\in\JTpower{X}$. This encourages the following definitions:

A set $\zeta$ of sets is a \textit{straddle-set} if and only if, for every $Z\in\zeta$ and every $Y\in\JTpower{Z}$ we have $Y\in\zeta$.

Given a potential $\Phi$ and a straddle-set $\zeta\subseteq\JTpower{\JTunderlying{\Phi}}$, the \textit{sparse format}, $\JTsp{\Phi}{\zeta}$, is the data-structure that stores the value $\Phi(Y)$ if and only if $Y\in\zeta$.

Note that storing the sparse format $\JTsp{\Phi}{\zeta}$ requires a space of only $\Theta(|\zeta|)$.

\noindent\hfil\rule{1\textwidth}{.4pt}\hfil

ARCH-2 works with the notions of the \textit{p-dual}, $\JTdual{\Psi}$, and the \textit{m-dual}, $\JTmdual{\Psi}$, of a potential $\Psi$. These are defined in sections \ref{pdualsection} and \ref{mdualsection} respectively. 

\noindent\hfil\rule{1\textwidth}{.4pt}\hfil

\subsection{Factorisations}
Suppose we have a probability distribution $\mathbb{P}$ on the set of binary labellings of a set $\JTbase$. Then a set, $\mathcal{F}$, of potentials is a \textit{factorisation} of $\mathbb{P}$ if and only if $\bigcup_{\Lambda\in\mathcal{F}}\JTunderlying{\Lambda}=S$ and for every binary labelling, $\mu$ of $\JTbase$ we have:
\begin{equation}
\mathbb{P}(\mu)\propto\left[\prod_{\Lambda\in\mathcal{F}}\Lambda\right]\left(\{x\in\JTbase:\mu(x)=1\}\right)
\end{equation}

\noindent\hfil\rule{1\textwidth}{.4pt}\hfil

\subsection{Junction Trees}
Given a tree $\JT$ we define $\JTvertex{\JT}$ and $\JTedge{\JT}$ to be the vertex and edge set of $\JT$ respectively. Also, given a tree $\JT$ and a vertex $ C\in\JTvertex{\JT}$ we define $\JTdegree{ C}$ and $\JTneighbour{ C}$ to be the degree (i.e. number of neighbours) and neighbourhood (i.e. set of neighbours) of $ C$ in $\JT$ respectively. When a tree $\JT$ is rooted we define, for a vertex $C\in\JTvertex{\JT}$, $\JTpar{C}$ and $\JTchildren{C}$ to be the parent of $C$ and the set of children of $C$  respectively.
\newline
\newline
A \textit{junction tree}, $\JT$, on a set $S$ is a tree satisfying the following axioms:
\begin{itemize}
\item Every vertex of $\JT$ is a subset of $S$. 
\item $\bigcup\JTvertex{\JT}=S$
\item Given $ C,  H\in\JTvertex{\JT}$ and some $x\in S$ such that $x\in C\cap H$ then $x$ is a member of every vertex in the path (in $\JT$) from $ C$ to $ H$.
\end{itemize}
The \textit{width} of a junction tree is defined as the cardinality of its largest vertex.


\noindent\hfil\rule{1\textwidth}{.4pt}\hfil

\section{The Junction Tree Algorithm}
\label{JTsection3}

The goal of this paper is as follows: We have a probability distribution $\mathbb{P}$ on binary labellings, $\mu$, of a set $\JTbase$ and a factorisation, $\mathcal{F}$, of $\mathbb{P}$. We wish  to compute the marginal probability $\mathbb{P}(\mu(x)=1)$ for every $x\in S$. 
\newline
\newline
Note that the marginal $\mathbb{P}(\mu(x)=1)$ is equivalent to a potential $\JTresult{x}\in\JTpots{x}$ defined as $\JTresult{x}(\emptyset):=\mathbb{P}(\mu(x)=0)$ and $\JTresult{x}(\{x\}):=\mathbb{P}(\mu(x)=1)$.
\newline
\newline
The \textit{junction tree algorithm} is a way of simultaneously computing the potentials $\JTresult{x}$ for every $x\in S$. The algorithm has three stages: The \textit{junction tree construction stage}, the \textit{message passing stage} and the \textit{computation of marginals stage}:

\begin{algorithm}
\label{JTA}
{\bf The Junction Tree Algorithm:}
\begin{enumerate}
\item Junction tree construction stage:\\ A junction tree $\JT$ on $S$ is constructed such that for all $\Lambda\in\mathcal{F}$ we have a vertex $\JTcontains{\Lambda}\in\JTvertex{\JT}$ for which $\JTunderlying{\Lambda}\subseteq\JTcontains{\Lambda}$. For every $ C\in\JTvertex{\JT}$ we define $\JTfactor{ C}:=\{\Lambda\in\mathcal{F}:\JTcontains{\Lambda}= C\}$.
\item \label{JTmessagepassing} Message passing stage:\\ For every ordered pair $( C,  E)$ of neighbouring vertices of $\JT$, we create and store a message $\JTmess{ C}{ E}$ which is a potential in $\JTpots{ C\cap E}$. When such a message is created we say that $ C$ ``sends" the message and $ E$ ``receives" the message. The messages are defined recursively by the following equation:
\begin{equation} \label{JTmultimess}
\JTmess{ C}{ E}:=\JTmarg{\left[\left(\prod_{\Lambda\in\JTfactor{ C}}\Lambda\right)\left(\JTone{ C\cap E}\prod_{ H\in\JTneighbour{ C}\setminus\{ E\}}\JTmess{ H}{ C}\right)\right]}{ C\cap E}
\end{equation}
\item Computation of marginals stage:\\ For every $x\in S$ we compute the potential $\JTresult{x}$ from the messages. Specifically, for any vertex $ C\in\JTvertex{\JT}$ with $x\in C$ we have:
\begin{equation}
\JTresult{x}=\JTmarg{\left[\left(\prod_{\Lambda\in\JTfactor{ C}}\Lambda\right)\left(\prod_{ H\in\JTneighbour{ C}}\JTmess{ H}{ C}\right)\right]}{\{x\}}
\end{equation}
\end{enumerate}
\end{algorithm}

\noindent\hfil\rule{1\textwidth}{.4pt}\hfil
In the rest of the paper the symbols $\JT$, $S$, $\JTfactor{ C}$ and $\JTmess{ H}{ C}$, as well as the notion of ``sending" and ``receiving" messages, are always defined as above.

\noindent\hfil\rule{1\textwidth}{.4pt}\hfil
In this paper we consider, in detail, the message passing stage of the junction tree algorithm:
\newline
We first review the Shafer-Shenoy and Hugin architectures that differ in how the messages are computed. With Shafer-Shenoy propagation each vertex $ C$ contributes a time of\\ $\Theta\left(\JTdegree{ C}(\JTdegree{ C}+|\JTfactor{ C}|)\JTexp{| C|}\right))$ to the message passing stage whilst with  Hugin propagation each vertex $ C$ contributes a time of $\Theta\left((\JTdegree{ C}+|\JTfactor{ C}|)\JTexp{| C|}\right)$ to the message passing stage. When we have large vertices of high degree Hugin propagation is hence significantly faster than Shafer-Shenoy propagation. However, this speed increase comes at a cost of a higher space complexity: whilst the space complexity of Shafer-Shenoy propagation is only that required to store the factors and messages, the Hugin architecture must store, for every vertex $ C$, a potential $\Psi_{ C}\in\JTpots{ C}$; meaning that the space required is exponential (base $2$) in the width of the junction tree.\\
 We then describe, from a merger of the ideas behind Shafer-Shenoy and Hugin propagation, a simple, novel architecture ARCH-1 which has (up to a constant factor)
  the best of both worlds: the speed of Hugin propagation and the low space complexity of Shafer-Shenoy propagation.\\
The main idea behind ARCH-1, that of simultaneously computing many marginals of a factored potential, then leads us into the novel architecture ARCH-2 which has (up to a factor linear in the width of the junction tree) at least the time and space efficiency of ARCH-1 and is considerably faster when we have large vertices of high degree. Specifically, each vertex $ C$ now contributes a time of only $\mathcal{O}\left(| C|\JTexp{| C|}\right)$ to the message passing stage and, in addition to storing the factors and messages, ARCH-2 requires a space of only $\mathcal{O}\left(\operatorname{max}_{ C\in\JTvertex{\JT}}| C|\left(\left(\sum_{ H\in\JTneighbour{ C}}\JTexp{| H\cap C|}\right)+\left(\sum_{\Lambda\in\JTfactor{ C}}\JTexp{|\JTunderlying{ \Lambda}|}\right)\right)\right)$.
\newline
\newline
We note that, although we don't explicitly describe the computation of marginals stage, the ideas behind ARCH-2 can be used to do this stage with time and space no greater than the message passing stage of ARCH-2 
. The details are left to the reader.
\newline
\newline
As stated in the introduction, to ease the reader's understanding, the algorithms given in sections \ref{JTsection4} though \ref{JTsection6} are sketches: to achieve the stated time complexities we must be able to find and store variables in constant amortised time and space. The exact implementations that give the stated time and space complexities are given in Section \ref{JTsection8}.
\newline
\newline
We also note, that the auxiliary space required by the algorithms in this paper is an additive factor of $\mathcal{O}(|S|)$ more than is stated since we must maintain an array of size $|S|$ (see Section \ref{JTsection8}). However, since $\mathcal{O}(|S|)$ is no greater than the space required to store the factors it is fine to neglect this.

\noindent\hfil\rule{1\textwidth}{.4pt}\hfil

\section{Shafer-Shenoy and Hugin Propagation}
\label{JTsection4}

\subsection{Shafer-Shenoy Propagation}
In this subsection we describe and analyse the complexity of Shafer-Shenoy propagation. Shafer-Shenoy propagation follows the following algorithm:


\begin{algorithm} {\bf Outline of Shafer-Shenoy Propagation:}\\
Given an ordered pair $( C,  E)$ of neighbouring vertices, once $ C$ has received messages from all vertices in $\JTneighbour{ C}\setminus\{ E\}$ the message $\JTmess{ C}{ E}$ is computed as:
\begin{equation}
\JTmess{ C}{ E}\leftarrow\JTmarg{\left[\left(\prod_{\Lambda\in\JTfactor{ C}}\Lambda\right)\left(\JTone{ C\cap E}\prod_{ H\in\JTneighbour{ C}\setminus\{ E\}}\JTmess{ H}{ C}\right)\right]}{ C\cap E}
\end{equation}
and is sent from $ C$ to $ E$.
\end{algorithm}

\noindent\hfil\rule{1\textwidth}{.4pt}\hfil

Note that the creation of a message in the above algorithm is an instance of the following operation (where $\{D_1, D_2, ..., D_k\}:=\{\JTunderlying{\Lambda}:\Lambda\in\JTfactor{ C}\}\cup\{ C\cap H: H\in\JTneighbour{ C}\}$):

\begin{operation}
\label{JTop1}
We have a set $C$, subsets $\{D_1, D_2, ... , D_k\}\subseteq\JTpower{C}$ and a subset $W\subseteq C$. For every $i\in\JTnatup{k}$ we have a potential $\Upsilon_i\in\JTpots{D_i}$. We must compute $\JTmarg{\left(\prod_{i=1}^k \Upsilon_i\right)}{W}$.
\end{operation}

\noindent\hfil\rule{1\textwidth}{.4pt}\hfil

If operation \ref{JTop1} is performed by firstly computing $\prod_{i=1}^k \Upsilon_i$ and then marginalising it onto $W$ it requires an auxiliary space on $\Theta\left(\JTexp{|C|}\right)$ leading to a space requirement of at least $\Omega\left(\operatorname{max}_{ H\in\JTvertex{\JT}}\JTexp{| H|}\right)$ for the whole algorithm. Hence, we now give an algorithm that can be implemented to perform operation \ref{JTop1} in a time of $\Theta\left(k\JTexp{|C|}\right)$ and which uses only constant auxiliary space:

\begin{algorithm}
\label{JTalg1}
For every $Y\in \JTpower{W}$ we maintain a variable $h(Y)\in\mathbb{R}$, initially set to zero.

For every $Z\in \JTpower{C}$, in turn, we do the following:
\begin{equation}
h(Z\cap W)\leftarrow h(Z\cap W) + \prod_{i=1}^k\Upsilon_i(Z\cap D_i)
\end{equation}
Note that after we have performed the above for every $Z\in \JTpower{C}$, the function $h$ is equal to the potential  $\JTmarg{\left(\prod_{i=1}^k \Upsilon_i\right)}{W}$. We then output the potential $h$.
\end{algorithm}

\noindent\hfil\rule{1\textwidth}{.4pt}\hfil

If algorithm \ref{JTalg1} is used for performing operation \ref{JTop1} then the computation of each message $\JTmess{ C}{ E}$ takes a time of $\Theta((\JTdegree{ C}+|\JTfactor{ C}|)\JTexp{| C|})$ and requires only constant auxiliary space. Hence, the space complexity of the entire message passing algorithm is the space required to store the factors and messages. Since each vertex $ C$ sends $\JTdegree{ C}$ messages, each vertex $ C$ contributes a time of  $\Theta(\JTdegree{ C}(\JTdegree{ C}+|\JTfactor{ C}|)\JTexp{ C})$ to the entire message passing algorithm.

\noindent\hfil\rule{1\textwidth}{.4pt}\hfil

\subsection{Hugin Propagation}
In this subsection we describe and analyse the complexity of Hugin propagation:\\
Hugin propagation stores the following potentials: For every vertex $ C\in\JTvertex{\JT}$ we have a potential $\Gamma_{ C}\in\JTpots{ C}$ initialised to be equal to $\JTone{C}\prod_{\Lambda\in\JTfactor{ C}}\Lambda$. For every edge $\{ C,  E\}\in\JTedge{\JT}$ we have a potential $\Psi_{\{ C,  E\}}\in\JTpots{ C\cap E}$ initialised equal to $\JTone{C\cap E}$. 
Hugin propagation follows the following algorithm:

\begin{algorithm} {\bf Hugin Propagtion:}\\
Given an ordered pair $( C,  E)$ of neighbouring vertices, once $ C$ has received messages from all vertices in $\JTneighbour{ C}\setminus\{ E\}$, it sends a message to $ E$ via the following algorithm:
\begin{enumerate}
\item Set $\Psi^{\operatorname{old}}_{\{ C,  E\}}\leftarrow\Psi_{\{ C,  E\}}$
\item Set $\Psi_{\{ C,  E\}}\leftarrow\JTmarg{\Gamma_{ C}}{ C\cap E}$
\item Set $\JTmess{ C}{ E}\leftarrow\Psi_{\{ C,  E\}}/\Psi^{\operatorname{old}}_{\{ C,  E\}}$
\item Set $\Gamma_{ E}\leftarrow\JTmess{ C}{ E}\Gamma_{ E}$
\end{enumerate}
\end{algorithm}

\noindent\hfil\rule{1\textwidth}{.4pt}\hfil
Note that the time required by a vertex $C$ to pass a message to a neighbour $ E$ is $\Theta(\JTexp{| C|}+\JTexp{| E|})$. Since each vertex $ C$ sends and receives a message to/from each of its neighbours, and since the potential $\Gamma_{ C}$ takes a time of $\Theta(|\JTfactor{ C}|\JTexp{| C|})$ to initialise, we have that $ C$ contributes a time of $\Theta((\JTdegree{ C}+|\JTfactor{ C}|)\JTexp{| C|})$ to the entire message passing algorithm. Note then that Hugin propagation is faster than Shafer-Shenoy propagation. The drawback, however, is that storing, for each vertex $ C$, the potential $\Psi_{ C}$ has a space requirement of $\Theta(\JTexp{| C|})$. This leads to a total space requirement of $\Theta(\sum_{ C\in\JTvertex{\JT}}\JTexp{| C|})$ which can be significantly more (and never less) than that of Shafer-Shenoy propagation.

\noindent\hfil\rule{1\textwidth}{.4pt}\hfil
Given a vertex $C\in\JTvertex{\JT}$, if the potential $\Gamma_{ C}$ is initialised by combining (via multiplication) the factors in $\JTfactor{C}$ on a binary basis (as is described in \cite {JTref11}) then the initialisation time of $\Gamma_{C}$ can be less than $\Theta\left(|\JTfactor{ C}|\JTexp{| C|}\right)$ so the time complexity of Hugin propagation can be reduced. However, each vertex still contributes a time of at least $\Omega\left(\JTdegree{C}\JTexp{| C|}\right)$ so if the degree of a vertex is greater than the number of associated factors then combining factors on a binary basis does not speed up this time by more than a constant factor. 
In addition, this faster version of Hugin propagation is still never faster than ARCH-2 by more than a logarithmic factor and when we have large vertices of high degree is still significantly slower than ARCH-2.

\noindent\hfil\rule{1\textwidth}{.4pt}\hfil
\section{ARCH-1}
\label{JTsection5}
In this section we describe the architecture ARCH-1 which has (up to a constant factor) the speed of Hugin propagation and the low space complexity of Shafer-Shenoy propagation. The reason for the low time/space complexity is that many marginals are computed simultaneously from a factored potential using a merger of the ideas behind Shafer-Shenoy and Hugin propagation: an algorithm similar to Algorithm \ref{JTalg1} and the division idea of the Hugin architecture. Like Shafer-Shenoy propagation we store only the messages. 
\newline
\newline
ARCH-1 selects a vertex $\JTroot$ as the root of $\JT$ and then (as is often in the description of Hugin and Shafer-Shenoy propagation) has two phases: the \textit{inward phase}, in which messages are passed up the tree to the root and the \textit{outward phase}, in which messages are passed down the tree from the root to the leaves. We first sketch an outline of ARCH-1 (which is also an outline of ARCH-2) before going into the details:
\begin{algorithm}
\label{JTalgorithm2second} {\bf Outline of ARCH-1/ARCH-2:}\\
The algorithm has two phases: First the \textit{inward phase} and then the \textit{outward phase}.
\begin{enumerate}
\item Inward phase: For every vertex $ C\in\JTvertex{\JT}\setminus\{\JTroot\}$, once $ C$ has received messages from all its children, it sends a message to its parent as follows:
\begin{enumerate} \item \label{JTnewfirst} The message $\JTmess{ C}{\JTpar{ C}}$ is computed as:
\begin{equation}
\label{JTfirst}\JTmess{ C}{\JTpar{ C}}\leftarrow\JTmarg{\left[\left(\prod_{\Lambda\in\JTfactor{ C}}\Lambda\right)\left(\boldsymbol{1}_{ C\cap\JTpar{ C}}\prod_{ H\in\JTchildren{ C}}\JTmess{ H}{ C}\right)\right]}{ C\cap\JTpar{ C}}
\end{equation}
and is sent from $ C$ to $\JTpar{ C}$.
\end{enumerate}
\item Outward phase: For every vertex $ C\in\JTvertex{\JT}$, once $ C$ has received messages from all its neighbours, it sends messages to all its children as follows:
\begin{enumerate}
\item \label{JTsecond} For every $ E\in\JTchildren{ C}$, simultaneously, the potential $M'_{ E}$ (in $\JTpots{ C\cap E}$) is computed as:
\begin{equation}
M'_{ E}\leftarrow\JTmarg{\left[\left(\prod_{\Lambda\in\JTfactor{ C}}\Lambda\right)\left(\prod_{ H\in\JTneighbour{ C}}\JTmess{ H}{ C}\right)\right]}{ C\cap E}
\end{equation}
\item \label{JTdivision} For every $ E\in\JTchildren{ C}$ the message $\JTmess{ C}{ E}$ is computed as:
\begin{equation}
\JTmess{ C}{ E}\leftarrow {M'_{ E}}/{\JTmess{ E}{ C}}
\end{equation}
and is sent to $ E$.
\end{enumerate}
\end{enumerate}
\end{algorithm}

\noindent\hfil\rule{1\textwidth}{.4pt}\hfil
 We now prove the correctness of Algorithm \ref{JTalgorithm2second}: i.e. that the messages are equal to those defined in Stage \ref{JTmessagepassing} of Algorithm \ref{JTA}.
 \newline
 \newline
Consider first the inward phase: Since Equation \ref{JTfirst} is the same as Equation \ref{JTmultimess} we have, by induction up $\JT$ from the leaves to the root, that $\JTmess{ C}{\JTpar{ C}}$ is correctly computed for every $ C\in\JTvertex{\JT}\setminus\{R\}$.
\newline
\newline
Consider next the outward phase: We prove, by induction on $ C$ down $\JT$ from the root to the leaves, that $\JTmess{ C}{ E}$ is correctly computed for all $ E\in\JTchildren{ C}$. By the inductive hypothesis and the result above that $\JTmess{ H}{{ C}}$ is correctly computed for every $ H\in\JTchildren{ C}$ we have that $\JTmess{ H}{{ C}}$ is correctly computed for every $ H\in\JTneighbour{ C}$. Hence for all $ E\in\JTchildren{ C}$ and all $Y\in\JTpower{ E}$ we have:
\begin{align}
M'_{ E}(Y)&=\JTmarg{\left[\left(\prod_{\Lambda\in\JTfactor{ C}}\Lambda\right)\left(\prod_{ H\in\JTneighbour{ C}}\JTmess{ H}{ C}\right)\right]}{ C\cap E}(Y)\\
&=\JTmarg{\left[\left(\prod_{\Lambda\in\JTfactor{ C}}\Lambda\right)\JTone{ C\cap E}\left(\prod_{ H\in\JTneighbour{ C}}\JTmess{ H}{ C}\right)\right]}{ C\cap E}(Y)\\
&=\sum_{Z\in\JTpower{ C}:Z\cap C\cap E=Y}\left[\left(\prod_{\Lambda\in\JTfactor{ C}}\Lambda\right)\JTone{ C\cap E}\left(\prod_{ H\in\JTneighbour{ C}}\JTmess{ H}{ C}\right)\right](Z)\\
&=\sum_{Z\in\JTpower{ C}:Z\cap C\cap E=Y}\JTmess{ E}{ C}(Z\cap C\cap E)\left[\left(\prod_{\Lambda\in\JTfactor{ C}}\Lambda\right)\JTone{ C\cap E}\left(\prod_{ H\in\JTneighbour{ C}\setminus\{ E\}}\JTmess{ H}{ C}\right)\right](Z)\\
&=\sum_{Z\in\JTpower{ C}:Z\cap C\cap E=Y}\JTmess{ E}{ C}(Y)\left[\left(\prod_{\Lambda\in\JTfactor{ C}}\Lambda\right)\JTone{ C\cap E}\left(\prod_{ H\in\JTneighbour{ C}\setminus\{ E\}}\JTmess{ H}{ C}\right)\right](Z)\\
&=\JTmess{ E}{ C}(Y)\sum_{Z\in\JTpower{ C}:Z\cap C\cap E=Y}\left[\left(\prod_{\Lambda\in\JTfactor{ C}}\Lambda\right)\JTone{ C\cap E}\left(\prod_{ H\in\JTneighbour{ C}\setminus\{ E\}}\JTmess{ H}{ C}\right)\right](Z)\\
&=\JTmess{ E}{ C}(Y)\JTmarg{\left[\left(\prod_{\Lambda\in\JTfactor{ C}}\Lambda\right)\left(\prod_{ H\in\JTneighbour{ C}\setminus\{ E\}}\JTmess{ H}{ C}\right)\right]}{ C\cap E}(Y)\\
&=\JTmess{ E}{ C}(Y)\JTmess{ C}{ E}(Y)
\end{align}
and hence $[M'_{ E}/\JTmess{ E}{ C}](Y)=\JTmess{ C}{ E}(Y)$ so $M'_{ E}/\JTmess{ E}{ C}=\JTmess{ C}{ E}$ which proves that the inductive hypothesis holds for $ C$

\noindent\hfil\rule{1\textwidth}{.4pt}\hfil

Note that Step \ref{JTnewfirst} and Step \ref{JTsecond} of Algorithm \ref{JTalgorithm2second} can be solved by instances of the following operation (where $\{D_1, D_2, ..., D_k\}:=\{\JTunderlying{\Lambda}:\Lambda\in\JTfactor{ C}\}\cup\{ C\cap H: H\in\JTneighbour{ C}\}$):

\begin{operation}
\label{JToper1}
We have, as input, a set $C$, and subsets $\{D_1, D_2, ... , D_k\}\subseteq\JTpower{C}$ with $\bigcup_{i=1}^kD_i=C$. For every $i\in\JTnatup{k}$ we have, as input, a potential $\Upsilon_i\in\JTpots{D_i}$.
\newline
Define $\Gamma:=\prod_{i=1}^k\Upsilon_i$ and for every $i\in\JTnatup{k}$ define $\Psi_i:=\JTmarg{\Gamma}{D_i}$.
\newline
We must compute $\Psi_i$ for every $i\in\JTnatup{k}$.
\end{operation}

\noindent\hfil\rule{1\textwidth}{.4pt}\hfil

ARCH-1 computes operation \ref{JToper1} via the following algorithm, which can be implemented in a time of $\Theta(k\JTexp{|C|})$ using only constant auxiliary space:

\begin{algorithm}
\label{JTalgorithm1}
For every $i\in\JTnatup{k}$ and every $Y\in \JTpower{D_i}$ we maintain a variable $h_i(Y)\in\mathbb{R}$ initially set equal to zero.
\newline
For every $Z\in \JTpower{C}$, in turn, we do the following:
\begin{enumerate}
\item Set $\alpha\leftarrow\prod_{i=1}^k\Upsilon_i(Z\cap D_i)$
\item For all $i\in\JTnatup{k}$ set $h_i(Z\cap D_i)\leftarrow h_i(Z\cap D_i)+\alpha$.
\end{enumerate}
Note that after we have performed the above for every $Z\in \JTpower{C}$, the function $h_i$ is a potential in $\JTpots{D_i}$. We then output, for every $i\in\JTnatup{k}$, $\Psi_i\leftarrow h_i$.
\end{algorithm}

\noindent\hfil\rule{1\textwidth}{.4pt}\hfil

The correctness of Algorithm \ref{JTalgorithm1} is seen immediately by noting that at the end of the algorithm we have, for all $i\in\JTnatup{k}$ and $Y\in\JTpower{D_i}$:
\begin{align}
h_i(Y)&=\sum_{[Z\in\JTpower{C}:Z\cup D_i=Y]}\prod_{i=1}^k\Upsilon_i(Z\cap D_i)\\
&=\sum_{Z\in\JTpower{C}:Z\cup D_i=Y}\left[\prod_{i=1}^k\Upsilon_i\right](Z)\\
&=\sum_{Z\in\JTpower{C}:Z\cup D_i=Y}\Gamma(Z)\\
&=\JTmarg{\Gamma}{D_i}(Y)\\
&=\Psi_i(Y)
\end{align}
where $\Gamma$ is as in the statement of operation \ref{JToper1}.

\noindent\hfil\rule{1\textwidth}{.4pt}\hfil

Note that, for every vertex $ C$, Operation \ref{JToper1} is called twice (once during the inward phase and once during the outward phase), each time taking a time, under Algorithm \ref{JTalgorithm1}, of $\Theta((\JTdegree{ C}+|\JTfactor{ C}|)\JTexp{| C|})$. Each vertex $ C$ hence contributes a time of $\Theta((\JTdegree{ C}+|\JTfactor{ C}|)\JTexp{| C|})$ to the time complexity of the whole message passing algorithm. ARCH-1 hence has the same time complexity as Hugin propagation. Like Shafer-Shenoy propagation, the space required by ARCH-1 is only 
that required to store the factors and messages.

\noindent\hfil\rule{1\textwidth}{.4pt}\hfil
In Section \ref{JTalgo1imp} we show how, by caching various quantities, we can, whilst keeping the same space requirements, speed up Algorithm \ref{JTalgorithm1} to take a time of only $\Theta\left(\sum_{i=1}^{|C|}|\{j:y_i\in D_j\}|\JTexp{i}\right)$ where $\{y_j:j\in\JTnatup{|C|}\}:=C$. In order be free to choose the ordering $(y_1, y_2, ..., y_{|C|})$ of $C$ that minimises this time we require an additional time of $\Theta\left(\sum_{i=1}^k|D_i|\JTexp{|D_i|}\right)$. However, even this faster implementation of ARCH-1 may still be significantly slower than ARCH-2 and will never be faster by more than a logarithmic factor.

\noindent\hfil\rule{1\textwidth}{.4pt}\hfil
\section{ARCH-2}
\label{JTsection6}
We now describe the architecture ARCH-2. The time and space complexities of ARCH-2 are always (up to a factor that is linear in the width of $\JT$) at least as good as those of ARCH-1. In cases in which we have large vertices of high degree ARCH-2 is significantly faster than ARCH-1/Hugin.\newline\newline Specifically, each vertex $ C$ contributes a time of only $\Theta(| C|\exp{| C|})$ to ARCH-2 and, in addition to storing the factors and messages, ARCH-2 requires a space of only \newline$\mathcal{O}\left(\operatorname{max}_{ C\in\JTvertex{\JT}}| C|\left(\left(\sum_{ H\in\JTneighbour{ C}}\JTexp{| H\cap C|}\right)+\left(\sum_{\Lambda\in\JTfactor{ C}}\JTexp{|\JTunderlying{ C}}\right)\right)\right)$.
\newline
\newline
ARCH-2 proceeds similarly to ARCH-1, using Operation \ref{JToper1} to do steps \ref{JTnewfirst} and \ref{JTsecond} of Algorithm \ref{JTalgorithm2second}. The only difference between ARCH-2 and ARCH-1 is how Operation \ref{JToper1} is computed. The algorithm for performing Operation \ref{JToper1} is based upon the concepts of the \textit{p-dual} and the \textit{m-dual} of a potential. We first give a definition of the duals and the required theory surrounding them.

\noindent\hfil\rule{1\textwidth}{.4pt}\hfil

\subsection{The p-Dual }
\label{pdualsection}

In this subsection we introduce the p-dual and the required theory surrounding it.
We first define the p-dual of a potential:

\begin{definition}
\label{JTdef1} {\bf The p-dual:}

Given a set $X$ and a potential $\Phi\in\JTpots{X}$, the \textit{p-dual}, $\JTdual{\Phi}$, of $\Phi$ is the potential in $\JTpots{X}$ that satisfies, for every $Y\in\JTpower{X}$:
\begin{equation}
\JTdual{\Phi}(Y)=\prod_{Z\in\JTpower{Y}}\JTperm{|Z|}{\Phi(Z)}
\end{equation}
\end{definition}

\noindent\hfil\rule{1\textwidth}{.4pt}\hfil

The next theorem will assist us in the the recovery of potential from its p-dual 

\begin{theorem}
\label{JTtheorem2}
Suppose we have a set $X$, an element $x\in X$ and a potential $\Phi\in\JTpots{X}$. Define $\Phi_{-}$ and $\Phi_{+}$ to be the potentials in $\JTpots{X\setminus\{x\}}$ that satisfy, for every $Z\in\JTpower{X\setminus\{x\}}$, $\Phi_{-}(Z):=\Phi(Z)$ and $\Phi_{+}(Z):=\Phi(Z\cup\{x\})$. For every $Y\in\JTpower{X\setminus\{x\}}$ we have the following:
\begin{enumerate}
\item $\JTdual{\Phi_{-}}(Y)=\JTdual{\Phi}(Y)$
\item $\JTdual{\Phi_{+}}(Y)={\JTdual{\Phi}(Y)}/{\JTdual{\Phi}(Y\cup\{x\})}$
\end{enumerate}
\end{theorem}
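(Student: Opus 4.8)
The plan is to prove both identities by unwinding the definition of the p-dual (Definition \ref{JTdef1}) and carefully tracking, for each subset appearing in the defining product $\JTdual{\Phi}(Y)=\prod_{Z\in\JTpower{Y}}\JTperm{|Z|}{\Phi(Z)}$, whether or not that subset contains the distinguished element $x$. The entire argument is combinatorial bookkeeping on the exponents, with no analytic content.

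For the first identity, I would observe that every $Z\in\JTpower{Y}$ is a subset of $Y\subseteq X\setminus\{x\}$ and hence avoids $x$. By the definition of $\Phi_{-}$ this means $\Phi_{-}(Z)=\Phi(Z)$ for every such $Z$, so the products defining $\JTdual{\Phi_{-}}(Y)$ and $\JTdual{\Phi}(Y)$ are term-by-term identical. This part is immediate.

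For the second identity, the key observation is that the power set $\JTpower{Y\cup\{x\}}$ splits into the subsets that avoid $x$ --- which are exactly the members of $\JTpower{Y}$ --- and the subsets that contain $x$ --- which are exactly the sets $W\cup\{x\}$ for $W\in\JTpower{Y}$. Splitting the defining product for $\JTdual{\Phi}(Y\cup\{x\})$ along this partition, the avoid-$x$ factors reassemble into precisely $\JTdual{\Phi}(Y)$, while the contain-$x$ factors contribute $\prod_{W\in\JTpower{Y}}\Phi(W\cup\{x\})^{(-1)^{|W\cup\{x\}|}}$. Since $x\notin W$ we have $|W\cup\{x\}|=|W|+1$, so each of these exponents is the negative of $(-1)^{|W|}$. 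Dividing $\JTdual{\Phi}(Y)$ by $\JTdual{\Phi}(Y\cup\{x\})$ therefore cancels the common $\JTdual{\Phi}(Y)$ factor and inverts the contain-$x$ product; the sign flip turns this inversion into $\prod_{W\in\JTpower{Y}}\Phi(W\cup\{x\})^{(-1)^{|W|}}$, which by the definition of $\Phi_{+}$ is exactly $\JTdual{\Phi_{+}}(Y)$.

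I do not expect a genuine obstacle: the whole derivation is definitional. The one point that requires care is the parity shift $|W\cup\{x\}|=|W|+1$, since it is precisely this sign flip that converts the quotient on the right-hand side into a product with the correct exponents, and hence explains why a division --- rather than a multiplication --- appears in the statement.
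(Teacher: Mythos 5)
Your proof is correct and takes essentially the same route as the paper's: both arguments rest on partitioning $\JTpower{Y\cup\{x\}}$ into the subsets avoiding $x$ (namely $\JTpower{Y}$) and the sets $W\cup\{x\}$ with $W\in\JTpower{Y}$, together with the parity shift $|W\cup\{x\}|=|W|+1$ that produces the reciprocal. The only difference is direction --- the paper transforms $\JTdual{\Phi_{+}}(Y)$ step by step into the quotient ${\JTdual{\Phi}(Y)}/{\JTdual{\Phi}(Y\cup\{x\})}$, whereas you expand the quotient and recover $\JTdual{\Phi_{+}}(Y)$; the underlying algebra is identical.
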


\begin{proof}
\begin{enumerate}
\item
\begin{align}
\JTdual{\Phi_{-}}(Y)&=\prod_{Z\in\JTpower{Y}}\JTperm{|Z|}{\Phi_{-}(Z)}\\
&=\prod_{Z\in\JTpower{Y}}\JTperm{|Z|}{\Phi(Z)}\\
&=\JTdual{\Phi}(Y)
\end{align}
\item
\begin{align}
\JTdual{\Phi_{+}}(Y)&=\prod_{Z\in\JTpower{Y}}\JTperm{|Z|}{\Phi_{+}(Z)}\\
&=\prod_{Z\in\JTpower{Y}}\JTperm{|Z|}{\Phi(Z\cup\{x\})}\\
&=\prod_{U\in\JTpower{Y\cup\{x\}:x\in U}}\JTperm{|U|-1}{\Phi(U)}\\
&=\left(\prod_{U\in\JTpower{Y\cup\{x\}:x\in U}}\JTperm{|U|}{\Phi(U)}\right)^{-1}\\
&=\left(\prod_{U\in\JTpower{Y\cup\{x\}}\setminus\JTpower{Y}}\JTperm{|U|}{\Phi(U)}\right)^{-1}\\
&=\left(\prod_{U\in\JTpower{Y}}\JTperm{|U|}{\Phi(U)}\right)\left(\prod_{U\in\JTpower{Y\cup\{x\}}}\JTperm{|U|}{\Phi(U)}\right)^{-1}\\
&=\frac{\JTdual{\Phi}(Y)}{\JTdual{\Phi}(Y\cup\{x\})}
\end{align}
\end{enumerate}
\end{proof}

\noindent\hfil\rule{1\textwidth}{.4pt}\hfil

From Theorem \ref{JTtheorem2} we get the following theorem, which will aid us in the construction of a p-dual.

\begin{theorem}
\label{JTtheorem24}
Suppose we have a set $X$, an element $x\in X$ and a potential $\Phi\in\JTpots{X}$. Define $\Phi_{-}$ and $\Phi_{+}$ to be the potentials in $\JTpots{X\setminus\{x\}}$ that satisfy, for every $Z\in\JTpower{X\setminus\{x\}}$, $\Phi_{-}(Z):=\Phi(Z)$ and $\Phi_{+}(Z):=\Phi(Z\cup\{x\})$. For every $Y\in\JTpower{X\setminus\{x\}}$ we have the following:
\begin{enumerate}
\item $\JTdual{\Phi}(Y)=\JTdual{\Phi_{-}}(Y)$
\item $\JTdual{\Phi}(Y\cup\{x\})={\JTdual{\Phi_{-}}(Y)}/{\JTdual{\Phi_{+}}(Y)}$
\end{enumerate}
\end{theorem}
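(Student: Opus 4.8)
The plan is to obtain both identities purely by algebraically rearranging the two relations already established in Theorem \ref{JTtheorem2}. That theorem expresses $\JTdual{\Phi_{-}}(Y)$ and $\JTdual{\Phi_{+}}(Y)$ in terms of the values $\JTdual{\Phi}(Y)$ and $\JTdual{\Phi}(Y\cup\{x\})$; here we simply invert this to read $\JTdual{\Phi}$ off from the duals of the two ``slices'' $\Phi_{-}$ and $\Phi_{+}$. Since all potentials take values in $\mathbb{R}^+$, every p-dual value is a finite product of strictly positive reals raised to powers $\pm 1$ and is therefore itself strictly positive, so all the quotients appearing below are well defined.

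For part 1 there is nothing to do: part 1 of Theorem \ref{JTtheorem2} states $\JTdual{\Phi_{-}}(Y)=\JTdual{\Phi}(Y)$, which is exactly the claimed identity $\JTdual{\Phi}(Y)=\JTdual{\Phi_{-}}(Y)$ read in the opposite direction.

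For part 2, I would start from part 2 of Theorem \ref{JTtheorem2}, namely $\JTdual{\Phi_{+}}(Y)=\JTdual{\Phi}(Y)/\JTdual{\Phi}(Y\cup\{x\})$, and solve it for $\JTdual{\Phi}(Y\cup\{x\})$ to obtain $\JTdual{\Phi}(Y\cup\{x\})=\JTdual{\Phi}(Y)/\JTdual{\Phi_{+}}(Y)$. Substituting the part 1 identity $\JTdual{\Phi}(Y)=\JTdual{\Phi_{-}}(Y)$ into the numerator then yields $\JTdual{\Phi}(Y\cup\{x\})=\JTdual{\Phi_{-}}(Y)/\JTdual{\Phi_{+}}(Y)$, as required.

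There is no genuine obstacle here: the entire content lies in Theorem \ref{JTtheorem2}, and this result is a one-line consequence of inverting its two relations. The only point deserving care is that the rearrangement of part 2 uses a division, so it is worth explicitly recording that positivity of the potential values guarantees the quotient is meaningful (the degenerate case of zero-valued potentials being deferred to the treatment of zeros in Section \ref{JTsection7}).
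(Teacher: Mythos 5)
Your proposal is correct and matches the paper's own proof, which likewise obtains both identities by solving the equations of Theorem \ref{JTtheorem2} for $\JTdual{\Phi}(Y)$ and $\JTdual{\Phi}(Y\cup\{x\})$; you merely spell out the one-line rearrangement (and the harmless positivity remark) in more detail.
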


\begin{proof}
The result comes from solving the equations of Theorem \ref{JTtheorem2} for $\JTdual{\Phi}(Y)$ and $\JTdual{\Phi}(Y\cup\{x\})$
\end{proof}

\noindent\hfil\rule{1\textwidth}{.4pt}\hfil

We next show that the p-dual of a product of potentials with the same domain is the product of the p-duals of the potentials:

\begin{lemma}
\label{JTlem4}
Given a set $X$ and potentials $\Phi,\Phi'\in\JTpots{X}$, we have $\JTdual{(\Phi'\Phi)}=(\JTdual{\Phi'})(\JTdual{\Phi})$
\end{lemma}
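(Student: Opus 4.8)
The plan is to prove the identity pointwise: I would fix an arbitrary $Y\in\JTpower{X}$ and show that $\JTdual{(\Phi'\Phi)}(Y)=\left[(\JTdual{\Phi'})(\JTdual{\Phi})\right](Y)$, from which the equality of potentials follows immediately.

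First I would record the one structural fact that makes everything collapse: since $\Phi'$ and $\Phi$ share the domain $X$, the product definition gives $[\Phi'\Phi](Z)=\Phi'(Z\cap X)\Phi(Z\cap X)=\Phi'(Z)\Phi(Z)$ for every $Z\in\JTpower{X}$, i.e. the product of two potentials on the same set is just pointwise multiplication. Likewise, $\JTdual{\Phi'}$ and $\JTdual{\Phi}$ both lie in $\JTpots{X}$, so $\left[(\JTdual{\Phi'})(\JTdual{\Phi})\right](Y)=\JTdual{\Phi'}(Y)\,\JTdual{\Phi}(Y)$.

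Next I would simply unfold Definition \ref{JTdef1} and split the resulting product using the exponent law $(ab)^c=a^cb^c$. Concretely:
\begin{align}
\JTdual{(\Phi'\Phi)}(Y)
&=\prod_{Z\in\JTpower{Y}}\JTperm{|Z|}{[\Phi'\Phi](Z)}\\
&=\prod_{Z\in\JTpower{Y}}\JTperm{|Z|}{\big(\Phi'(Z)\Phi(Z)\big)}\\
&=\left(\prod_{Z\in\JTpower{Y}}\JTperm{|Z|}{\Phi'(Z)}\right)\left(\prod_{Z\in\JTpower{Y}}\JTperm{|Z|}{\Phi(Z)}\right)\\
&=\JTdual{\Phi'}(Y)\,\JTdual{\Phi}(Y).
\end{align}
Combining the last line with the pointwise product identity for $(\JTdual{\Phi'})(\JTdual{\Phi})$ noted above finishes the argument.

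There is really no hard part here: the proof is a one-line computation once the correct bookkeeping is in place. The only thing to be careful about is the very first reduction, namely invoking that the product is pointwise because both factors are defined on the identical set $X$ (so that $X\cup X=X$ and $Z\cap X=Z$); if the two potentials had different domains this splitting would not be legitimate in the same form, so I would state that reduction explicitly rather than leave it implicit.
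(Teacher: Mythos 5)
Your proposal is correct and follows essentially the same route as the paper's proof: fix $Y\in\JTpower{X}$, unfold Definition \ref{JTdef1}, use that the product of two potentials on the same set is pointwise multiplication, and split the product over $\JTpower{Y}$ via the exponent law $(ab)^c=a^cb^c$. Your explicit remark that $Z\cap X=Z$ justifies the pointwise reduction is a small clarification the paper leaves implicit, but it is not a different argument.
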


\begin{proof}
For any $Y\in\JTpower{X}$ we have:
\begin{align}
\JTdual{[\Phi'\Phi]}(Y)&=\prod_{Z\in\JTpower{Y}}\JTperm{|Z|}{[\Phi'\Phi](Z)}\\
&=\prod_{Z\in\JTpower{Y}}\JTperm{|Z|}{[\Phi'(Z)\Phi(Z)]}\\
&=\prod_{Z\in\JTpower{Y}}\JTperm{|Z|}{\Phi'(Z)}\JTperm{|Z|}{\Phi(Z)}\\
&=\left(\prod_{Z\in\JTpower{Y}}\JTperm{|Z|}{\Phi'(Z)}\right)\left(\prod_{Z\in\JTpower{Y}}\JTperm{|Z|}{\Phi'(Z)}\right)\\
&=[\JTdual{\Phi'}(Y)][\JTdual{\Phi}(Y)]
\end{align}
\end{proof}

\noindent\hfil\rule{1\textwidth}{.4pt}\hfil

With the aid of the following lemma we will show how to compute the p-dual of the product of small potentials:

\begin{lemma}
\label{JTlem3}
Given a set $X$, a set $Y\subseteq X$, and a potential $\Phi\in\JTpots{Y}$, let $\Phi'$ be the potential in $\JTpots{X}$ that satisfies, for all $Z\in\JTpower{X}$, $\Phi'(Z):=\Phi(Z\cap Y)$. Then given $U\in\JTpower{X}$ we have:
\begin{enumerate}
\item If $U\subseteq Y$ then $\JTdual{\Phi'}(U)=\JTdual{\Phi}(U)$
\item If $U\nsubseteq Y$ then $\JTdual{\Phi'}(U)=1$
\end{enumerate}
\end{lemma}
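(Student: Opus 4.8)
The plan is to unfold the definition of the p-dual, writing $\JTdual{\Phi'}(U)=\prod_{Z\in\JTpower{U}}\JTperm{|Z|}{\Phi'(Z)}=\prod_{Z\in\JTpower{U}}\JTperm{|Z|}{\Phi(Z\cap Y)}$, and then to exploit the single structural fact that $Z\cap Y$ depends only on the part of $Z$ lying inside $Y$. Everything follows from examining how the factor $\Phi(Z\cap Y)$ behaves as $Z$ ranges over $\JTpower{U}$, together with the sign $\JTperm{|Z|}{\cdot}$.

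For the first case, $U\subseteq Y$, I would observe that every $Z\in\JTpower{U}$ already satisfies $Z\subseteq U\subseteq Y$, so $Z\cap Y=Z$. Substituting this into the product immediately yields $\prod_{Z\in\JTpower{U}}\JTperm{|Z|}{\Phi(Z)}$, which is exactly the definition of $\JTdual{\Phi}(U)$. This direction is essentially a one-line rewrite and presents no difficulty.

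The second case, $U\nsubseteq Y$, is the substantive one, and I would prove it by a sign-cancellation argument. Since $U\nsubseteq Y$ there exists an element $x\in U\setminus Y$; I would fix such an $x$ and partition $\JTpower{U}$ into the pairs $\{W,\,W\cup\{x\}\}$ indexed by $W\in\JTpower{U\setminus\{x\}}$. The crucial observation is that because $x\notin Y$ we have $(W\cup\{x\})\cap Y=W\cap Y$, so both members of each pair feed the \emph{same} value $\Phi(W\cap Y)$ into the product; meanwhile $|W\cup\{x\}|=|W|+1$, so their exponents $(-1)^{|W|}$ and $(-1)^{|W|+1}$ are negatives of one another. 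Hence each pair contributes $\Phi(W\cap Y)^{(-1)^{|W|}+(-1)^{|W|+1}}=\Phi(W\cap Y)^{0}=1$, and the whole product collapses to $1$.

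The only point requiring care is the bookkeeping of the pairing: I must verify that $W\mapsto\{W,\,W\cup\{x\}\}$ is a genuine partition of $\JTpower{U}$ (every subset of $U$ either contains $x$ or not, and deleting $x$ from those that do recovers a unique $W\in\JTpower{U\setminus\{x\}}$), and that $x\notin W$ so the count $|W\cup\{x\}|=|W|+1$ holds exactly. Unlike the double-dual computation, no binomial identity is needed here—only the trivial fact that $(-1)^{|W|}+(-1)^{|W|+1}=0$—so I expect no real obstacle beyond stating the partition cleanly.
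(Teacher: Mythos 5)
Your proof is correct and matches the paper's own argument essentially verbatim: part 1 is the same one-line rewrite using $Z\cap Y=Z$, and part 2 is the same cancellation obtained by choosing an element of $U\setminus Y$ and pairing each $W\in\JTpower{U\setminus\{x\}}$ with $W\cup\{x\}$ so that the exponents $(-1)^{|W|}$ and $(-1)^{|W|+1}$ annihilate. No gap to report.
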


\begin{proof}
\begin{enumerate}
\item 
\begin{align}
\JTdual{\Phi'}(U)&=\prod_{Z\in\JTpower{U}}\JTperm{|Z|}{\Phi'(Z)}\\
&=\prod_{Z\in\JTpower{U}}\JTperm{|Z|}{\Phi(Z\cap Y)}\\
&=\prod_{Z\in\JTpower{U}}\JTperm{|Z|}{\Phi(Z)}\\
&=\JTdual{\Phi}(U)
\end{align}
\item
We have $U\setminus Y\neq\emptyset$ so choose some $v\in U\setminus Y$. We then have:
\begin{align}
 \JTdual{\Phi'}(U)&=\prod_{Z\in\JTpower{U}}\JTperm{|Z|}{\Phi'(Z)}\\
&=\prod_{W\in\JTpower{U\setminus\{v\}}}\JTperm{|W|}{\Phi'(W)}\JTperm{|W\cup\{v\}|}{\Phi'(W\cup\{v\})}\\
&=\prod_{W\in\JTpower{U\setminus\{v\}}}\JTperm{|W|}{\Phi(W\cap Y)}\JTperm{|W\cup\{v\}|}{\Phi((W\cup\{v\})\cap Y)}\\
&=\prod_{W\in\JTpower{U\setminus\{v\}}}\JTperm{|W|}{\Phi(W\cap Y)}\JTperm{|W\cup\{v\}|}{\Phi(W\cap Y)}\\
&=\prod_{W\in\JTpower{U\setminus\{v\}}}\JTperm{|W|}{\Phi(W\cap Y)}\JTperm{|W|+1}{\Phi(W\cap Y)}\\
&=\prod_{W\in\JTpower{U\setminus\{v\}}}\JTperm{|W|}{\Phi(W\cap Y)}{\Phi(W\cap Y)}^{-(-1)^{|W|}}\\
&=\prod_{W\in\JTpower{U\setminus\{v\}}}{\Phi(W\cap Y)}^{(-1)^{|W|}-(-1)^{|W|}}\\
&=\prod_{W\in\JTpower{U\setminus\{v\}}}{\Phi(W\cap Y)}^{0}\\
&=1
 \end{align}
 \end{enumerate}
\end{proof}

\noindent\hfil\rule{1\textwidth}{.4pt}\hfil

\begin{theorem}
\label{JTtheorem3}
Suppose we have a set $X$, subsets $\{Y_i:i\in\mathbb{N}_k\}\subseteq\JTpower{X}$ such that $\bigcup\{Y_i:i\in\mathbb{N}_k\}=X$ and potentials $\{\Phi_i:i\in\mathbb{N}_k\}$ such that $\Phi_i\in\JTpots{Y_i}$. Then for every $U\in\JTpower{X}$ we have:
\begin{equation}
\JTdual{\left[\prod_{i=1}^k\Phi_i\right]}(U)=\prod_{i:U\subseteq Y_i}\JTdual{\Phi_i}(U)
\end{equation}
\end{theorem}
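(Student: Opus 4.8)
The plan is to reduce the statement to the single-domain situation covered by Lemma \ref{JTlem4}, and then use Lemma \ref{JTlem3} to account for the fact that the $\Phi_i$ have different domains. First I would, for each $i\in\JTnatup{k}$, introduce the extension $\Phi_i'$ of $\Phi_i$ to all of $X$: namely the potential in $\JTpots{X}$ defined by $\Phi_i'(Z):=\Phi_i(Z\cap Y_i)$ for every $Z\in\JTpower{X}$. These are precisely the extended potentials appearing in Lemma \ref{JTlem3}, and unlike the $\Phi_i$ themselves they all share the common domain $X$.

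The key first step is to observe that $\prod_{i=1}^k\Phi_i=\prod_{i=1}^k\Phi_i'$ as potentials in $\JTpots{X}$. This is immediate from the definition of the product of potentials: for any $Z\in\JTpower{X}$ the product over the $\Phi_i$ restricts each factor to its own domain $Y_i$, giving $\left[\prod_{i=1}^k\Phi_i\right](Z)=\prod_{i=1}^k\Phi_i(Z\cap Y_i)=\prod_{i=1}^k\Phi_i'(Z)$, where the equality $\bigcup_i Y_i=X$ guarantees the left-hand product lives in $\JTpots{X}$.

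With this identification in hand, since the $\Phi_i'$ all have domain $X$, I would apply Lemma \ref{JTlem4} — promoted from two factors to $k$ factors by a routine induction — to conclude, for any $U\in\JTpower{X}$,
\[
\JTdual{\left[\prod_{i=1}^k\Phi_i\right]}(U)=\JTdual{\left[\prod_{i=1}^k\Phi_i'\right]}(U)=\prod_{i=1}^k\JTdual{\Phi_i'}(U).
\]
Finally I would substitute the two cases of Lemma \ref{JTlem3}: each factor $\JTdual{\Phi_i'}(U)$ equals $\JTdual{\Phi_i}(U)$ when $U\subseteq Y_i$ and equals $1$ when $U\nsubseteq Y_i$. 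Discarding the trivial factors leaves exactly $\prod_{i:U\subseteq Y_i}\JTdual{\Phi_i}(U)$, which is the claimed identity.

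The only genuinely conceptual point—and hence the main obstacle—is the identification $\prod_i\Phi_i=\prod_i\Phi_i'$, which hinges on recognising that the product of potentials over differing domains is defined by restricting each factor to its own domain, matching exactly the $Z\cap Y_i$ in the definition of $\Phi_i'$. Everything else is a direct appeal to the two cited lemmas, together with the straightforward induction needed to extend Lemma \ref{JTlem4} from two potentials to $k$.
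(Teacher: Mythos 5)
Your proposal is correct and follows essentially the same route as the paper's own proof: extend each $\Phi_i$ to $\Phi_i'\in\JTpots{X}$, identify $\prod_i\Phi_i$ with $\prod_i\Phi_i'$, apply Lemma \ref{JTlem4} to the common-domain product, and then dispose of the factors with $U\nsubseteq Y_i$ via Lemma \ref{JTlem3}. The only difference is that you spell out two points the paper leaves implicit (the justification of $\prod_i\Phi_i=\prod_i\Phi_i'$ and the induction extending Lemma \ref{JTlem4} to $k$ factors), which is a welcome elaboration rather than a divergence.
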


\begin{proof}
For $i\in\mathbb{N}_k$ let $\Phi'_i$ be the potential in $\JTpots{X}$ that satisfies, for all $Z\in\JTpower{X}$, $\Phi'(Z):=\Phi(Z\cap Y)$. Then we have:
\begin{align}
\JTdual{\left[\prod_{i=1}^k\Phi_i\right]}(U)&=\JTdual{\left[\prod_{i=1}^k\Phi'_i\right]}(U)\\
\label{JTeq10}&=\prod_{i=1}^k\JTdual{\Phi'_i}(U)\\
&=\left(\prod_{i:U\subseteq Y_i}\JTdual{\Phi'_i}(U)\right)\left(\prod_{i:U\not\subseteq Y_i}\JTdual{\Phi'_i}(U)\right)\\
\label{JTeq11}&=\left(\prod_{i:U\subseteq Y_i}\JTdual{\Phi_i}(U)\right)\left(\prod_{i:U\not\subseteq Y_i}1\right)\\
&=\prod_{i:U\subseteq Y_i}\JTdual{\Phi_i}(U)
\end{align}
Where equation \ref{JTeq10} comes from lemma \ref{JTlem4} and equation \ref{JTeq11} comes from lemma \ref{JTlem3}.
\end{proof}

\noindent\hfil\rule{1\textwidth}{.4pt}\hfil

\subsection{The m-Dual}
\label{mdualsection}
In this subsection we introduce the m-dual and the required theory surrounding it. The m-dual was defined in \cite{JTref7} under the name of ``inclusion-exclusion format" but was used in a very different way. We first define the m-dual of a potential:

\begin{definition}
\label{JTdef1} {\bf The m-dual:}

Given a set $X$ and a potential $\Phi\in\JTpots{X}$, the \textit{m-dual}, $\JTmdual{\Phi}$, of $\Phi$ is the potential in $\JTpots{X}$ that satisfies, for every $Y\in\JTpower{X}$:
\begin{equation}
\JTmdual{\Phi}(Y)=\sum_{Z\in\JTpower{X}:Y\subseteq Z}\Phi(Z)
\end{equation}
\end{definition}

\noindent\hfil\rule{1\textwidth}{.4pt}\hfil

The following theorem will be useful in the construction of an m-dual:

\begin{theorem}
\label{JTtheorem20}
Suppose we have a set $X$, an element $x\in X$ and a potential $\Phi\in\JTpots{X}$. Define $\Phi_{-}$ and $\Phi_{+}$ to be the potentials in $\JTpots{X\setminus\{x\}}$ that satisfy, for every $Z\in\JTpower{X\setminus\{x\}}$, $\Phi_{-}(Z):=\Phi(Z)$ and $\Phi_{+}(Z):=\Phi(Z\cup\{x\})$. For every $Y\in\JTpower{X\setminus\{x\}}$ we have the following:
\begin{enumerate}
\item $\JTmdual{\Phi}(Y)=\JTmdual{\Phi_{-}}(Y)+\JTmdual{\Phi_{+}}(Y)$
\item $\JTmdual{\Phi}(Y\cup\{x\})=\JTmdual{\Phi_{+}}(Y)$
\end{enumerate}
\end{theorem}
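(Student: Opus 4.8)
The plan is to prove both identities directly from the definition $\JTmdual{\Phi}(Y)=\sum_{Z\in\JTpower{X}:Y\subseteq Z}\Phi(Z)$ by partitioning the set of supersets over which this sum ranges according to whether or not they contain the distinguished element $x$. Throughout I would use that $Y\in\JTpower{X\setminus\{x\}}$, so $x\notin Y$, and that every subset of $X$ containing $x$ can be written uniquely as $Z'\cup\{x\}$ for some $Z'\in\JTpower{X\setminus\{x\}}$.

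For the first identity I would start from $\JTmdual{\Phi}(Y)$ and split the range of summation into the subsets $Z$ with $x\notin Z$ and those with $x\in Z$. Since $x\notin Y$, a subset $Z\subseteq X$ with $x\notin Z$ satisfies $Y\subseteq Z$ exactly when $Z\in\JTpower{X\setminus\{x\}}$ contains $Y$; summing $\Phi(Z)=\Phi_{-}(Z)$ over these gives precisely $\JTmdual{\Phi_{-}}(Y)$. For the subsets containing $x$, substituting $Z=Z'\cup\{x\}$ with $Z'\in\JTpower{X\setminus\{x\}}$ and again using $x\notin Y$, the condition $Y\subseteq Z$ becomes $Y\subseteq Z'$, while $\Phi(Z'\cup\{x\})=\Phi_{+}(Z')$, so this part of the sum is exactly $\JTmdual{\Phi_{+}}(Y)$. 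Adding the two pieces yields the claim.

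For the second identity the same substitution does all the work, but now the argument is $Y\cup\{x\}$, which already contains $x$. Hence every $Z$ with $Y\cup\{x\}\subseteq Z$ must contain $x$, so the ``$x\notin Z$'' part of the sum is empty and only the part with $x\in Z$ survives. Writing $Z=Z'\cup\{x\}$ as before, the condition $Y\cup\{x\}\subseteq Z$ is equivalent to $Y\subseteq Z'$ (membership of $x$ being automatic), and $\Phi(Z'\cup\{x\})=\Phi_{+}(Z')$, so the sum collapses to $\sum_{Z'\in\JTpower{X\setminus\{x\}}:Y\subseteq Z'}\Phi_{+}(Z')=\JTmdual{\Phi_{+}}(Y)$.

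I do not expect a serious obstacle: the whole argument is a bookkeeping exercise, and the only point requiring mild care is verifying that $Z\mapsto Z\setminus\{x\}$ (equivalently $Z'\mapsto Z'\cup\{x\}$) is a bijection between the subsets of $X$ containing $x$ and $\JTpower{X\setminus\{x\}}$, and that it correctly transports the containment condition $Y\subseteq Z$. This mirrors exactly the case split used in Theorem \ref{JTtheorem2} for the p-dual, with products replaced by sums, so the structure of the proof parallels that earlier argument.
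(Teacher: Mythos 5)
Your proposal is correct and follows essentially the same argument as the paper's proof: both split the defining sum $\JTmdual{\Phi}(Y)=\sum_{Z\in\JTpower{X}:Y\subseteq Z}\Phi(Z)$ according to whether $x\in Z$, and use the substitution $U:=Z\setminus\{x\}$ (your $Z=Z'\cup\{x\}$) to identify the two pieces with $\JTmdual{\Phi_{-}}(Y)$ and $\JTmdual{\Phi_{+}}(Y)$, with part 2 being the degenerate case where the $x\notin Z$ piece is empty. No gaps; the bookkeeping point you flag about the bijection transporting the containment condition is exactly what the paper's two labelled substitution steps verify.
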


\begin{proof}
\begin{enumerate}
\item
\begin{align}
\JTmdual{\Phi}(Y)&=\sum_{Z\in\JTpower{X}:Y\subseteq Z}\Phi(Z)\\
&=\left(\sum_{Z\in\JTpower{X}:Y\subseteq Z, x\notin Z}\Phi(Z)\right)+\left(\sum_{Z\in\JTpower{X}:Y\subseteq Z, x\in Z}\Phi(Z)\right)\\
&=\left(\sum_{Z\in\JTpower{X\setminus\{x\}}:Y\subseteq Z}\Phi(Z)\right)+\left(\sum_{Z\in\JTpower{X}:Y\subseteq Z, x\in Z}\Phi(Z)\right)\\
\label{JTequation100}&=\left(\sum_{Z\in\JTpower{X\setminus\{x\}}:Y\subseteq Z}\Phi(Z)\right)+\left(\sum_{U\in\JTpower{X\setminus\{x\}}:Y\subseteq U}\Phi(U\cup\{x\})\right)\\
&=\left(\sum_{Z\in\JTpower{X\setminus\{x\}}:Y\subseteq Z}\Phi_{-}(Z)\right)+\left(\sum_{U\in\JTpower{X\setminus\{x\}}:Y\subseteq U}\Phi_{+}(U)\right)\\
&=\JTmdual{\Phi_{-}}(Y)+\JTmdual{\Phi_{+}}(Y)
\end{align}
where Equation \ref{JTequation100} comes from setting $U:=Z\setminus\{x\}$ in the right-hand sum
\item
\begin{align}
\JTmdual{\Phi}(Y\cup\{x\})&=\sum_{Z\in\JTpower{X}:Y\cup\{x\}\subseteq Z}\Phi(Z)\\
\label{JTequation101}&=\sum_{U\in\JTpower{X\setminus\{x\}}:Y\subseteq U}\Phi(U\cup\{x\})\\
&=\sum_{U\in\JTpower{X\setminus\{x\}}:Y\subseteq U}\Phi_{+}(U)\\
&=\JTmdual{\Phi_{+}}(Y)
\end{align}
where Equation \ref{JTequation101} comes from setting $U:=Z\setminus\{x\}$.
\end{enumerate}
\end{proof}

\noindent\hfil\rule{1\textwidth}{.4pt}\hfil

From Theorem \ref{JTtheorem20} we get the following theorem, which will be useful in converting an m-dual back to the original potential:

\begin{theorem}
\label{JTtheorem21}
Suppose we have a set $X$, an element $x\in X$ and a potential $\Phi\in\JTpots{X}$. Define $\Phi_{-}$ and $\Phi_{+}$ to be the potentials in $\JTpots{X\setminus\{x\}}$ that satisfy, for every $Z\in\JTpower{X\setminus\{x\}}$, $\Phi_{-}(Z):=\Phi(Z)$ and $\Phi_{+}(Z):=\Phi(Z\cup\{x\})$. For every $Y\in\JTpower{X\setminus\{x\}}$ we have the following:
\begin{enumerate}
\item $\JTmdual{\Phi_{-}}(Y)=\JTmdual{\Phi}(Y)-\JTmdual{\Phi}(Y\cup\{x\})$
\item $\JTmdual{\Phi_{+}}(Y)=\JTmdual{\Phi}(Y\cup\{x\})$
\end{enumerate}
\end{theorem}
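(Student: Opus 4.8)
The plan is to recognise that Theorem \ref{JTtheorem21} is simply the inversion of Theorem \ref{JTtheorem20}. The two identities of Theorem \ref{JTtheorem20} form a linear system in the two unknowns $\JTmdual{\Phi_{-}}(Y)$ and $\JTmdual{\Phi_{+}}(Y)$, with $\JTmdual{\Phi}(Y)$ and $\JTmdual{\Phi}(Y\cup\{x\})$ regarded as known quantities, and I would solve this system for the unknowns. Since the hypotheses of the two theorems are word-for-word identical (the same $X$, $x$, $\Phi$ and the same definitions of $\Phi_{-}$ and $\Phi_{+}$), Theorem \ref{JTtheorem20} applies directly for each fixed $Y\in\JTpower{X\setminus\{x\}}$.

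I would carry out the solution in the order dictated by the triangular structure of the system. First, part 2 of Theorem \ref{JTtheorem20} already isolates one unknown: it reads $\JTmdual{\Phi}(Y\cup\{x\})=\JTmdual{\Phi_{+}}(Y)$, so reversing the equality immediately yields part 2 of the present theorem, $\JTmdual{\Phi_{+}}(Y)=\JTmdual{\Phi}(Y\cup\{x\})$. Next, I would take part 1 of Theorem \ref{JTtheorem20}, namely $\JTmdual{\Phi}(Y)=\JTmdual{\Phi_{-}}(Y)+\JTmdual{\Phi_{+}}(Y)$, rearrange it as $\JTmdual{\Phi_{-}}(Y)=\JTmdual{\Phi}(Y)-\JTmdual{\Phi_{+}}(Y)$, and substitute the value of $\JTmdual{\Phi_{+}}(Y)$ just found to obtain $\JTmdual{\Phi_{-}}(Y)=\JTmdual{\Phi}(Y)-\JTmdual{\Phi}(Y\cup\{x\})$, which is part 1.

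I do not expect any real obstacle. Because the system is triangular --- part 2 determines $\JTmdual{\Phi_{+}}(Y)$ outright, which then feeds into part 1 --- no genuine matrix inversion or non-degeneracy check is required; the argument is the exact m-dual counterpart of the one-line derivation of Theorem \ref{JTtheorem24} from Theorem \ref{JTtheorem2}. The only point needing mild care is the bookkeeping of the substitution, ensuring that $\JTmdual{\Phi_{+}}(Y)$ is replaced in the correct equation before reading off the final expression for $\JTmdual{\Phi_{-}}(Y)$.
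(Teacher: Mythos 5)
Your proposal is correct and matches the paper's own proof, which likewise obtains Theorem \ref{JTtheorem21} by solving the two equations of Theorem \ref{JTtheorem20} for $\JTmdual{\Phi_{-}}(Y)$ and $\JTmdual{\Phi_{+}}(Y)$. The triangular substitution you spell out is exactly the intended (one-line) argument.
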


\begin{proof}
The result comes from solving the equations of Theorem \ref{JTtheorem20} for $\JTmdual{\Phi_{-}}(Y)$ and $\JTmdual{\Phi_{+}}(Y)$
\end{proof}

\noindent\hfil\rule{1\textwidth}{.4pt}\hfil

We next show how marginals are computed when working with m-duals:

\begin{theorem}
\label{JTtheorem22}
Suppose we have sets $X$ and $Y$ with $Y\subseteq X$ and a potential $\Phi\in\JTpots{X}$. Then for every $Z\in\JTpower{Y}$ we have:
\begin{equation}
\JTmdual{\left[\JTmarg{\Phi}{Y}\right]}(Z)=\JTmdual{\Phi}(Z)
\end{equation}
\end{theorem}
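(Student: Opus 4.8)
The plan is to unfold both definitions and then collapse the resulting double sum. Writing $\Psi := \JTmarg{\Phi}{Y}$, which lives in $\JTpots{Y}$, the left-hand side $\JTmdual{\Psi}(Z)$ is by definition the sum of $\Psi(W)$ over all $W\in\JTpower{Y}$ with $Z\subseteq W$. I would then substitute the definition of the marginal, namely $\Psi(W)=\sum_{U\in\JTpower{X}:U\cap Y=W}\Phi(U)$, to obtain a double sum over pairs $(W,U)$ subject to the constraints $W\in\JTpower{Y}$, $Z\subseteq W$, and $U\cap Y = W$.

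The key observation is that $W$ is entirely determined by $U$ through $W = U\cap Y$, so the outer summation over $W$ is redundant: each $U\in\JTpower{X}$ contributes to exactly one term, the one with $W = U\cap Y$. Hence the double sum collapses into a single sum of $\Phi(U)$ over those $U\in\JTpower{X}$ for which $Z\subseteq U\cap Y$. The final simplification uses the hypothesis $Z\in\JTpower{Y}$, i.e.\ $Z\subseteq Y$: under this, the condition $Z\subseteq U\cap Y$ is equivalent to $Z\subseteq U$. The sum therefore becomes $\sum_{U\in\JTpower{X}:Z\subseteq U}\Phi(U)$, which is exactly $\JTmdual{\Phi}(Z)$ by the definition of the m-dual, completing the argument.

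I expect no genuine obstacle here beyond careful bookkeeping. The one place warranting attention is the collapse of the double sum, where I would verify that the map $U\mapsto U\cap Y$ puts the index set $\{U\in\JTpower{X} : Z\subseteq U\cap Y\}$ in bijection with the pairs $(W,U)$ appearing in the double sum, so that no term is double-counted or dropped. Once this is checked, the equivalence $Z\subseteq U\cap Y \iff Z\subseteq U$ (valid precisely because $Z\subseteq Y$) finishes the proof. Note also that the statement genuinely requires $Z\subseteq Y$; the identity asserts nothing about arguments of $\JTmdual{\Phi}$ lying outside $\JTpower{Y}$, which is consistent with the left-hand side only being defined on $\JTpower{Y}$.
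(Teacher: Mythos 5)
Your proposal is correct and follows essentially the same route as the paper's proof: expand both definitions into a double sum, observe that each subset of $X$ appears exactly once because it determines its intersection with $Y$, and then use the hypothesis $Z\subseteq Y$ to replace the condition $Z\subseteq U\cap Y$ with $Z\subseteq U$. The only difference is cosmetic (your $U$ and $W$ are the paper's $W$ and $U$), so no further comparison is needed.
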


\begin{proof}
\begin{align}
\JTmdual{\left[\JTmarg{\Phi}{Y}\right]}(Z)&=\sum_{U\in\JTpower{Y}:Z\subseteq U}\JTmarg{\Phi}{Y}(U)\\
&=\sum_{[U\in\JTpower{Y}:Z\subseteq U]}\sum_{[W\in\JTpower{X}:W\cap Y= U]}\Phi(W)
\end{align}
Note that if we have $U, U' \in \JTpower{Y}$ with $U\neq U'$ and we have $W, W'\in\JTpower{X}$ with $W\cap Y=U$ and $W'\cap Y=U'$ then $W\cap Y\neq W'\cap Y$ so $W\neq W'$. Hence, each $W$ is the (double) is counted only once.
\newline
Suppose we have $W\in\JTpower{X}$ with $Z\subseteq W$. Then if $U:= W\cap Y$ then since $Z\subseteq Y$ and $Z\subseteq W$ we have $Z\subseteq U$ so $W$ is included in the (double) sum.
\newline
Now suppose $W$ is included in the (double) sum. Then there exists a $U\in \JTpower{Y}$ with $Z\subseteq U$ such that $W \cap Y =U$. Hence $Z\subseteq W \cap Y$ so $Z\subseteq W$.
\newline
Hence, for each $W\in \JTpower{X}$, $W$ is contained in the (double) sum if and only if $Z \subseteq W$ and so since, by above, each such $W$ is counted only once in the (double) sum we have:
\begin{align}
\JTmdual{\left[\JTmarg{\Phi}{Y}\right]}(Z)&=\sum_{[U\in\JTpower{Y}:Z\subseteq U]}\sum_{[W\in\JTpower{X}:W\cap Y= U]}\Phi(W)\\
&=\sum_{W\in\JTpower{X}:Z\subseteq W}\Phi(W)\\
&=\JTmdual{\Phi}(Z)
\end{align}
\end{proof}

\noindent\hfil\rule{1\textwidth}{.4pt}\hfil

\subsection{Functions for Manipulating Potentials}
\label{JTfunctions}

We now describe the functions used by ARCH-2. The functions are {\bf transform1} which transforms a potential into its p-dual, {\bf product} which computes the product of potentials when working with p-duals, {\bf transform2} which transforms the p-dual of a potential (in a sparse format) into the m-dual of the potential (in a sparse format), {\bf marginalise} which computes marginals of a potential while working with m-duals, and {\bf transform3} which transforms the m-dual of a potential back to the original potential.
\newline\newline
The functions {\bf transform1}, {\bf transform2} and {\bf transform3} all rest on the observation that, given a potential $\Phi$ with $\JTunderlying{\Phi}=\emptyset$, we have $\JTmdual{\Phi}=\JTdual{\Phi}=\Phi$.
\newline\newline In the description of the functions {\bf transform1}, {\bf transform2} and {\bf transform3}, $\Phi_{-}$ and $\Phi_{+}$ are defined from $\Phi$ and $x$ as in the statements of theorems \ref{JTtheorem2}, \ref{JTtheorem24}, \ref{JTtheorem20} and \ref{JTtheorem21}: i.e. $\Phi_{-}$ and $\Phi_{+}$ are the potentials in $\JTpots{\JTunderlying{\Phi}\setminus\{x\}}$ that satisfy, for every $Z\in\JTpower{\JTunderlying{\Phi}\setminus\{x\}}$, $\Phi_{-}(Z):=\Phi(Z)$ and $\Phi_{+}(Z):=\Phi(Z\cup\{x\})$
\newline\newline For a detailed description of how to implement these functions so they have the stated time and space complexities see section \ref{JTFuncImpSection} (which is based on notation and algorithms given earlier in section \ref{JTsection8})

\noindent\hfil\rule{1\textwidth}{.4pt}\hfil
We first describe the recursive function {\bf transform1}:
\begin{itemize}
\item The function takes, as input, a potential $\Phi$
\item The function outputs the p-dual, $\JTdual{\Phi}$, of $\Phi$. 
\item The algorithm can be implemented to take a time of $\Theta\left(|\JTunderlying{\Phi}|\JTexp{|\JTunderlying{\Phi}|}\right)$ and to require  $\Theta\left(\JTexp{|\JTunderlying{\Phi}|}\right)$ auxiliary space. This is proved immediately by induction over $|\JTunderlying{\Phi}|$.
\item The correctness of the algorithm comes directly from Theorem \ref{JTtheorem24}, using induction over $|\JTunderlying{\Phi}|$.
\end{itemize}

\begin{algorithm} \label{JTf1}
{\bf transform1}$(\Phi)$:
\newline
If $\JTunderlying{\Phi}=\emptyset$ then return $\Phi$. Else, perform the following:
\begin{enumerate}
\item \label{JTf1step1} Choose $x\in\JTunderlying{\Phi}$.
\item \label{JTf1step2} For each $Z\in\JTpower{\JTunderlying{\Phi}\setminus\{x\}}$ set $\Phi_{-}(Z)\leftarrow\Phi(Z)$ and $\Phi_{+}(Z)\leftarrow\Phi(Z\cup\{x\})$. Note that $\Phi_{-}$ and $\Phi_{+}$ are now potentials in $\JTpots{\JTunderlying{\Phi}\setminus\{x\}}$.
\item \label{JTf1step3} Set $\JTdual{\Phi_{-}}\leftarrow${\bf transform1}$(\Phi_{-})$ and $\JTdual{\Phi_{+}}\leftarrow${\bf transform1}$(\Phi_{+})$.
\item \label{JTf1step4} For each $Y\in\JTpower{\JTunderlying{\Phi}\setminus\{x\}}$ set $\JTdual{\Phi}(Y)\leftarrow\JTdual{\Phi_{-}}(Y)$ and $\JTdual{\Phi}(Y\cup\{x\})\leftarrow{\JTdual{\Phi_{-}}(Y)}/{\JTdual{\Phi_{+}}(Y)}$
\item \label{JTf1step5} Return $\JTdual{\Phi}$.
\end{enumerate}
\end{algorithm}

\noindent\hfil\rule{1\textwidth}{.4pt}\hfil

We now describe the function {\bf product}:
\begin{itemize}
\item The function takes, as input, a set $\{\JTdual{\Upsilon_i}: i\in\JTnatup{k}\}$ of p-duals of potentials $\Upsilon_i$.
\item The function outputs the sparse format  $\JTsp{[\JTdual{\Gamma}]}{\zeta}$ where $\zeta=\bigcup_{i=1}^k\{\JTpower{\JTunderlying{\Upsilon_i}}\}$ and $\Gamma=\prod_{i=1}^k\Upsilon_i$. It is the case that for all $Z\in\JTunderlying{\Gamma}$ with $Z\notin\zeta$ we have $\JTdual{\Gamma}(Z)=1$. 
\item The algorithm can be implemented to take a time of $\mathcal{O}\left(\JTexp{|\bigcup_{i=1}^k\JTunderlying{\Upsilon_i}|}+\sum_{i=1}^k\JTexp{|\JTunderlying{\Upsilon_i}|}\right)$ and to require $\Theta(|\zeta|)$ auxiliary space.
\item The correctness of the algorithm comes directly from Theorem \ref{JTtheorem3}.
\end{itemize}

\begin{algorithm} \label{JTf2}
{\bf product}$(\{\JTdual{\Upsilon_i}: i\in\JTnatup{k}\})$:
\begin{enumerate}
\item \label{JTf2step1} Let $\zeta\leftarrow\bigcup_{i=1}^k\{\JTpower{\JTunderlying{\JTdual{\Upsilon_i}}}\}$
\item \label{JTf2step2} For each $Z\in\zeta$ set $\JTdual{\Gamma}(Z)\leftarrow\prod_{i\in\JTnatup{k}:Z\subseteq\JTunderlying{\Upsilon_i}}\JTdual{\Upsilon_i}(Z)$.
\item Return $\JTsp{[\JTdual{\Gamma}]}{\zeta}$
\end{enumerate}
\end{algorithm}

\noindent\hfil\rule{1\textwidth}{.4pt}\hfil

We now describe the recursive function {\bf transform2}:
\begin{itemize}
\item The function takes, as input, a sparse format, $\JTsp{[\JTdual{\Phi}]}{\zeta}$, of the p-dual of a potential $\Phi$ where $\zeta$ is a straddle-set such that, for all $Z\in\JTunderlying{\Phi}$ with $Z\notin\zeta$, we have $\JTdual{\Phi}(Z)=1$.
\item The function outputs the sparse format, $\JTsp{[\JTmdual{\Phi}]}{\zeta}$, of the m-dual of $\Phi$. 
\item The algorithm can be implemented to take a time of $\mathcal{O}(|\JTunderlying{\Phi}|\JTexp{|\JTunderlying{\Phi|}})$ and to require a space of only $\mathcal{O}(|\JTunderlying{\Phi}||\zeta|)$. This is proved immediately by induction over $|\JTunderlying{\Phi}|$, noting that for $x\in\JTunderlying{\Phi}$ we have that $|\{U\in\zeta: x\notin U\}|\leq|\zeta|$.
\item The correctness of the algorithm comes directly from theorems \ref{JTtheorem2} and \ref{JTtheorem20}, using induction over $|\JTunderlying{\Phi}|$ 
\end{itemize}

\begin{algorithm} \label{JTf3}
{\bf transform2}$(\JTsp{[\JTdual{\Phi}]}{\zeta})$:
\newline
If $\JTunderlying{\JTdual{\Phi}}=\emptyset$ then return $\Phi$. Else, perform the following:
\begin{enumerate}
\item \label{JTf3step1} Choose $x\in\JTunderlying{\Phi}$
\item \label{JTf3step2} Set $\vartheta\leftarrow\{U\in\zeta: x\notin U\}$
\item \label{JTf3step3} For each $Y\in\vartheta$ set $\JTdual{\Phi_{-}}(Y)\leftarrow\JTdual{\Phi}(Y)$
\item \label{JTf3step4} For each $Y\in\vartheta$ do the following:
\newline
If $Y\cup\{x\}\in\zeta$ then set $\JTdual{\Phi_{+}}(Y)\leftarrow\JTdual{\Phi}(Y)/\JTdual{\Phi}(Y\cup\{x\})$. Else set $\JTdual{\Phi_{+}}(Y)\leftarrow\JTdual{\Phi}(Y)$
\item \label{JTf3step5} Set $\JTsp{[\JTmdual{\Phi_{-}}]}{\vartheta}\leftarrow$ {\bf transform2}$(\JTsp{[\JTdual{\Phi_{-}}]}{\vartheta})$ and $\JTsp{[\JTmdual{\Phi_{+}}]}{\vartheta}\leftarrow$ {\bf transform2}$(\JTsp{[\JTdual{\Phi_{+}}]}{\vartheta})$
\item \label{JTf3step6} For each $Y\in\vartheta$ set $\JTmdual{\Phi}(Y)\leftarrow\JTmdual{\Phi_{-}}(Y)+\JTmdual{\Phi_{+}}(Y)$ and $\JTmdual{\Phi}(Y\cup\{x\})\leftarrow\JTmdual{\Phi_{+}}(Y)$
\item \label{JTf3step7} Return $\JTsp{[\JTmdual{\Phi}]}{\zeta}$
\end{enumerate}
\end{algorithm}

\noindent\hfil\rule{1\textwidth}{.4pt}\hfil

We now describe the function {\bf marginalise}:
\begin{itemize}
\item The function takes, as input, a sparse format $\JTsp{[\JTmdual{\Gamma}]}{\zeta}$ of the m-dual of a potential $\Gamma$ as well as a set $\{D_i:i\in\JTnatup{k}\})$ where $\zeta=\bigcup_{i=1}^k\JTpower{D_i}$.
\item The function outputs the set of potentials $\{\JTmdual{\Psi_i}:i\in\JTnatup{k}\}$ where, for every $i\in\JTnatup{k}$, $\Psi_i:=\JTmarg{\Gamma}{U_i}$.
\item The algorithm can be implemented to take a time of $\Theta\left(\sum_{i=1}^k\JTexp{|D_i|}\right)$ and to require $\Theta(|\zeta|)$ auxiliary space.
\item The correctness of the algorithm comes directly from Theorem \ref{JTtheorem22}
\end{itemize}

\begin{algorithm} \label{JTf4}
{\bf marginalise}$(\JTsp{[\JTmdual{\Gamma}]}{\zeta}, \{D_i:i\in\JTnatup{k}\})$:
\begin{enumerate}
\item For every $Z\in\zeta$ perform the following:
\newline
For every $i\in\JTnatup{k}$ with $Z\subseteq D_i$ set $\JTmdual{\Psi_i}(Z)\leftarrow\JTmdual{\Gamma}(Z)$
\item Return $\{\JTmdual{\Psi_i}:i\in\JTnatup{k}\}$
\end{enumerate}
\end{algorithm}

\noindent\hfil\rule{1\textwidth}{.4pt}\hfil

We now describe the recursive function {\bf transform3}:
\begin{itemize}
\item The function that takes as input the m-dual, $\JTmdual{\Phi}$, of a potential $\Phi$.
\item The function outputs the potential $\Phi$.
\item The algorithm can be implemented to take a time of $\Theta\left(|\JTunderlying{\Phi}|\JTexp{|\JTunderlying{\Phi}|}\right)$ and to require  $\Theta\left(\JTexp{|\JTunderlying{\Phi}|}\right)$ auxiliary space. This is proved immediately by induction over $|\JTunderlying{\Phi}|$
\item The correctness of the algorithm comes directly from Theorem \ref{JTtheorem21}, using induction over $|\JTunderlying{\Phi}|$.
\end{itemize}

\begin{algorithm} \label{JTf5}
{\bf transform3}$(\JTmdual{\Phi})$:
\newline
If $\JTunderlying{\JTmdual{\Phi}}=\emptyset$ then return $\Phi$. Else, perform the following:
\begin{enumerate}
\item \label{JTf5step1} Choose $x\in\JTunderlying{\Phi}$.
\item \label{JTf5step2} For each $Y\in\JTpower{\JTunderlying{\Phi}\setminus\{x\}}$ set $\JTmdual{\Phi_{-}}(Y)\leftarrow\JTmdual{\Phi}(Y)-\JTmdual{\Phi}(Y\cup\{x\})$ and $\JTmdual{\Phi_{+}}(Y)\leftarrow\JTmdual{\Phi}(Y\cup\{x\})$. Note that $\JTmdual{\Phi_{-}}$ and $\JTmdual{\Phi_{+}}$ are now m-duals of potentials in $\JTpots{\JTunderlying{\Phi}\setminus\{x\}}$.
\item \label{JTf5step3} Set $\Phi_{-}\leftarrow${\bf transform3}$(\JTmdual{\Phi_{-}})$ and ${\Phi_{+}}\leftarrow${\bf transform3}$(\JTmdual{\Phi_{+}})$.
\item \label{JTf5step4} For each $Y\in\JTpower{\JTunderlying{\Phi}\setminus\{x\}}$ set ${\Phi}(Y)\leftarrow{\Phi_{-}}(Y)$ and ${\Phi}(Y\cup\{x\})\leftarrow{{\Phi_{+}}(Y)}$
\item \label{JTf5step5} Return ${\Phi}$.
\end{enumerate}
\end{algorithm}

\noindent\hfil\rule{1\textwidth}{.4pt}\hfil

\subsection{Performing Operation \ref{JToper1}}
As stated at the start of the section, the only difference between ARCH-1 and ARCH-2 is the way that Operation \ref{JToper1}, used to do steps \ref{JTnewfirst} and \ref{JTsecond} of Algorithm \ref{JTalgorithm2second}, is performed.\\ In this subsection let $C$, $k$, $D_i$, $\Upsilon_i$, $\Gamma$ and $\Psi_i$ be as in the description of Operation \ref{JToper1}. That is: $C$ is a set. $D_1, ..., D_k$ are subsets of $C$ with $\bigcup_{i=1}^kD_i=C$. $\Upsilon_i$ is a potential in $\JTpots{D_i}$. $\Gamma:=\prod_{i=1}^k\Upsilon_i$ and $\Psi_i:=\JTmarg{\Gamma}{D_i}$. The goal of Operation \ref{JToper1} is to compute $\Psi_i$ for every $i\in\JTnatup{k}$.
\\
 ARCH-2 performs Operation \ref{JToper1} via the following algorithm:

\begin{algorithm}
\label{JTarchalgo}
Performing operation \ref{JToper1}:
\begin{enumerate}
\item \label{JTline1} For every $i\in\JTnatup{k}$ set $\JTdual{\Upsilon_i}\leftarrow{\bf transform1}(\Upsilon_i)$
\item \label{JTline2} Set $\JTsp{[\JTdual{\Gamma}]}{\zeta}\leftarrow{\bf product}(\{\JTdual{\Upsilon_i}:i\in\JTnatup{k}\})$
\item \label{JTline3} Set $\JTsp{[\JTmdual{\Gamma]}}{\zeta}\leftarrow{\bf transform2}(\JTsp{[\JTdual{\Gamma}]}{\zeta})$
\item \label{JTline4} Set $\{\JTmdual{\Psi_i}:i\in\JTnatup{k}\}\leftarrow{\bf marginalise}(\JTsp{[\JTmdual{\Gamma}]}{\zeta}, \{D_i:i\in\JTnatup{k}\})$
\item \label{JTline5} For every $i\in\JTnatup{k}$ set ${\Psi_i}\leftarrow{\bf transform3}(\JTmdual{\Psi_i})$ and return $\Psi_i$
\end{enumerate}
\end{algorithm}

\noindent\hfil\rule{1\textwidth}{.4pt}\hfil

To summarise, Algorithm \ref{JTarchalgo} does the following: First the potentials $\Upsilon_i$ are converted into their p-duals. From these p-duals, the p-dual of the product, $\Gamma$, of the potentials $\Upsilon_i$ is computed and stored in a sparse format. From this potential, a sparse format of the m-dual of $\Gamma$ is computed and is then used to compute the m-duals of the $D_i$-marginals, $\Psi_i$, of $\Gamma$. These m-duals are then converted into the potentials $\Psi_i$.

\noindent\hfil\rule{1\textwidth}{.4pt}\hfil

The correctness of the Algorithm \ref{JTarchalgo} is proved as follows: lines \ref{JTline1} and \ref{JTline5} are cleary valid by the descriptions of the functions {\bf transform1} and {\bf transform3}. Since $\Gamma=\prod_{i=1}^k\Upsilon_i$ line \ref{JTline2} is valid. Note that, since by line \ref{JTline2}, $\zeta=\bigcup_{i=1}^k\JTpower{\JTunderlying{\JTdual{\Upsilon_i}}}$, $\zeta$ is a straddle set. Hence, since by line \ref{JTline2} it is true that for all $Y\in\JTpower{C}$ with $Y\notin{\zeta}$ we have $\JTdual{\Gamma}(Y)=1$, line \ref{JTline3} is valid. Since, by line \ref{JTline2} we have $\zeta=\bigcup_{i=1}^k\JTpower{\JTunderlying{\JTdual{\Upsilon_i}}}=\bigcup_{i=1}^k\JTpower{D_i}$, line \ref{JTline4} is valid.

\noindent\hfil\rule{1\textwidth}{.4pt}\hfil

\subsection{Time and Space Complexity}

We now derive the time complexity of Algorithm \ref{JTarchalgo} and use it to calculate the time complexity of ARCH-2:
\newline
Lines \ref{JTline1} and \ref{JTline5} of Algorithm \ref{JTarchalgo} take a time of $\Theta\left(\sum_{i=1}^k|D_i|\JTexp{|D_i|}\right)$. Lines \ref{JTline2} and \ref{JTline4} take a time of \\$\mathcal{O}\left(\JTexp{|C|}+\sum_{i=1}^k|D_i|\JTexp{|D_i|}\right)$. 
Line \ref{JTline3} takes a time of $\mathcal{O}\left(|C|\JTexp{|C|}\right)$. The total time complexity of Algorithm \ref{JTarchalgo} is hence $\mathcal{O}\left(|C|\JTexp{|C|}+\sum_{i=1}^k|D_i|\JTexp{|D_i|}\right)$
\newline
Hence Equation \ref{JTfirst} and Step \ref{JTsecond} of algorithm \ref{JTalgorithm2second} both take a time of:
\begin{align}
&\mathcal{O}\left(| C|\JTexp{| C|}+\left(\sum_{ H\in\JTneighbour{ C}}| H\cap C|\JTexp{| H\cap{ C}|}\right)+\left(\sum_{\Lambda\in\JTfactor{ C}}|\Lambda|\JTexp{|\Lambda|}\right)\right)\\
=&\mathcal{O}\left(| C|\JTexp{| C|}+\left(|\JTpar{ C}\cap C|\JTexp{|\JTpar{ C}\cap{ C}|}+\sum_{ H\in\JTchildren{ C}}| H\cap C|\JTexp{| H\cap{ C}|}\right)+\left(\sum_{\Lambda\in\JTfactor{ C}}|\Lambda|\JTexp{|\Lambda|}\right)\right)\\
\subseteq&\mathcal{O}\left(| C|\JTexp{| C|}+\left(| C|\JTexp{|{ C}|}+\sum_{ H\in\JTchildren{ C}}| H\cap C|\JTexp{| H\cap{ C}|}\right)+\left(\sum_{\Lambda\in\JTfactor{ C}}|\Lambda|\JTexp{|\Lambda|}\right)\right)\\
=&\mathcal{O}\left(| C|\JTexp{| C|}+\left(\sum_{ H\in\JTchildren{ C}}| H\cap C|\JTexp{| H\cap{ C}|}\right)+\left(\sum_{\Lambda\in\JTfactor{ C}}|\Lambda|\JTexp{|\Lambda|}\right)\right)\\
\label{JTfactorequation}\subseteq&\mathcal{O}\left(| C|\JTexp{| C|}+\left(\sum_{ H\in\JTchildren{ C}}| H|\JTexp{| H|}\right)+\left(\sum_{\Lambda\in\JTfactor{ C}}|\Lambda|\JTexp{|\Lambda|}\right)\right)
\end{align}
We call this time complexity the ``computation time at $ C$". Note that every vertex $ H$ contributes $\mathcal{O}(| H|\JTexp{| H|})$ to the computation time at $ H$, a time of $\mathcal{O}(| H|\JTexp{| H|})$ to the computation time at $\JTpar{ H}$ (if it exists), and no time to computation time at any other vertex. Each vertex $ H$ hence contributes a total time of $\mathcal{O}(| H|\JTexp{| H|})$ to the running time of ARCH-2. In addition, by Equation \ref{JTfactorequation} we have that each factor $\Lambda$ contributes a time of $\mathcal{O}\left(|\JTunderlying{\Lambda}|\JTexp{|\JTunderlying{\Lambda}|}\right)$ to the running time of ARCH-2.
\newline
Note that the total running time of ARCH-2 is, up to a logarithmic factor, no worse than that of ARCH-1 (and Hugin propagation), and in cases where we have large vertices of high degree ARCH-2 is much faster than ARCH-1 (and Hugin propagation).

\noindent\hfil\rule{1\textwidth}{.4pt}\hfil
We now derive the space complexity of ARCH-2:

The auxiliary space requirement of Algorithm \ref{JTarchalgo} is the maximum space required by any of the functions which is $\mathcal{O}\left(|C||\zeta|\right)\subseteq\mathcal{O}\left(|C|\bigcup_{i=1}^k\JTexp{|D_i|}\right)$. Hence equation \ref{JTfirst} and step \ref{JTsecond} of algorithm \ref{JTalgorithm2second} both require an auxiliary space of $\mathcal{O}\left(| C|\left(\left(\sum_{ H\in\JTneighbour{ C}}\JTexp{| H\cap C|}\right)+\left(\sum_{\Lambda\in\JTfactor{ C}}\JTexp{|\JTunderlying{ C}}\right)\right)\right)$. This implies that, in addition to storing the messages and factors, ARCH-2 has a space requirement of only $\mathcal{O}\left(\operatorname{max}_{ C\in\JTvertex{\JT}}| C|\left(\left(\sum_{ H\in\JTneighbour{ C}}\JTexp{| H\cap C|}\right)+\left(\sum_{\Lambda\in\JTfactor{ C}}\JTexp{|\JTunderlying{ \Lambda}|}\right)\right)\right)$. Hence, the space requirement of ARCH-2 is not greater than that of ARCH-1 (and Shafer-Shenoy propagation) by more than a factor that is linear in width of the junction tree (and since this factor is logarithmic in the time complexity of the algorithm it is negligible).

\noindent\hfil\rule{1\textwidth}{.4pt}\hfil

\section{Incorporating Zeros}
\label{JTsection7}

So far we have only considered potentials that map into the positive reals. We now show how to generalise so that the codomain of a potential can be $\mathbb{R}^+\cup\{0\}$. In \cite{JTref1} the concept of a \textit{zero-concious number} is introduced to do this with Hugin propagation (for a wider range of queries). However, to work with ARCH-2 we need a slightly different object:
\begin{definition}{\bf MZC (multi-zero conscious) number:}
\begin{itemize}
\item An MZC number is a pair $(a, i)\in\mathbb{R}^+\times\mathbb{Z}$.
\item The product, $(a,i)\times (b,j)$,  of two MZC numbers, $(a,i)$ and $(b,j)$, is defined to be equal to $(c,k)$ where $c=ab$ and $k=i+j$.
\item For two MZC numbers $(a,i)$ and $(b,j)$ we define $(a,i)/(b,j):=(a,i)\times (1/b, -j)$.
\item The sum, $(a,i)+(b,j)$,  of two MZC numbers, $(a,i)$ and $(b,j)$, is defined to be equal to $(c,k)$ where $c$ and $k$ are defined are follows: If $i<j$ then $c:=a$ and $k:=i$, if $i=j$ then $c:=a+b$ and $k:=i$, and if $i>j$ then $c:=b$ and $k:=j$.
 \end{itemize}
 \end{definition}

\noindent\hfil\rule{1\textwidth}{.4pt}\hfil

A real number $x\in\mathbb{R}\cup\{0\}$ is converted into an MZC number as follows: If $x=0$ then it is converted into the MZC number $(1,1)$. Else it is converted into the MZC number $(x,0)$.

An MZC number $(a,i)$ is converted into a real number as follows: If $i\neq 0$ then it is converted into $0$. Else it is converted into $a$.

\noindent\hfil\rule{1\textwidth}{.4pt}\hfil

Zeros are incorporated into ARCH-2 as follows: Before running Algorithm \ref{JTarchalgo} all numbers (that is: the quantities $\Upsilon_i(Z)$) are converted from reals numbers into MZC numbers. Lines \ref{JTline1} to  \ref{JTline3} of Algorithm \ref{JTarchalgo} are then run with MZC numbers instead of reals. After Line \ref{JTline3} is complete then all MZC numbers (that is: the quantities $\JTmdual{\Gamma}(Z)$) are coverted from MZC numbers to real numbers. Lines \ref{JTline4} and \ref{JTline5} of Algorithm \ref{JTarchalgo} are then run using real numbers.

Due to the equivalence of ARCH-1/ARCH-2 to Hugin propagation we can, in Line \ref{JTdivision} of Algorithm \ref{JTalgorithm2second}, define division of a real number by zero to be equal to zero (or any other number) as is done in Hugin propagation (see \cite{JTref1}).

\noindent\hfil\rule{1\textwidth}{.4pt}\hfil

\section{Implementation Details}
\label{JTsection8}
In this section we make use of tree-structured data-structures. Whenever we use the word ``vertex" or ``leaf" we are referring to a vertex in one of these tree-structured data-structures (not a junction tree).
\newline
\newline
In this section we assume, without loss of generality, that $S=\JTnatup{n}$ for some $n\in\mathbb{N}$. Throughout the entire junction tree algorithm we maintain an array $\mathcal{A}$ of size $n$ such that each element of $\mathcal{A}$ (is a pointer to) a set (of pointers to) internal vertices of trees. Note that in our pseudo-code we will regard each element of $\mathcal{A}$ to be a set of vertices rather than a pointer to a set of pointers to vertices. 
We denote the $e$-th element of $\mathcal{A}$ by $\JTarray{e}$.
We also maintain a set $\JTL$ of (pointers to) leaves of trees. Like $\mathcal{A}$ we shall, in our pseudo-code, regard $\JTL$ as a set of leaves rather than a set of pointers to leaves.

$\mathcal{A}$ and $\JTL$ are used only for synchronised-searches and full-searches (see later) and in between different synchronised-searches/full-searches we have $\JTL=\emptyset$ and $\mathcal{A}(e)=\emptyset$ for all $e\in\JTnatup{n}$.

\noindent\hfil\rule{1\textwidth}{.4pt}\hfil
\subsection{Data-Structures}

An \textit{oriented binary tree} is a rooted tree in which every internal vertex $v$ has two children: one child is called the \textit{left-child} of $v$ and is denoted by $\JTlchild{v}$. The other is called the \textit{right-child} of $v$ and is denoted by $\JTrchild{v}$.
\newline
Given a vertex $v$ in an oriented binary tree we define $\JTancestor{v}$ to be the set of ancestors of $v$ (including $v$) and we define $\JTdescendants{v}$ to be the subtree of $v$ and its descendants.
\newline
\newline
Given a straddle-set $\zeta\subseteq\JTpower{\JTnatup{n}}$ we define the \textit{straddle-tree}, $\JTstree{\zeta}$ as follows: $\JTstree{\zeta}$ is an oriented binary tree who's internal vertices are labelled with numbers in $\bigcup{\zeta}$. Given an internal vertex $v$ we let $\JTlabel{v}$ be the label of $v$. The labels are such that given internal vertices $v$ and $w$ such that $w$ is a child of $v$ we have that $\JTlabel{w}>\JTlabel{v}$. We have a bijection, $\tau$, from the leaves of $\JTstree{\zeta}$ into the set $\zeta$ such that, for any leaf $l$ we have $\JTleafset{l}:=\{\JTlabel{v}: v\in\JTancestor{v},~ \JTrchild{v}\in\JTancestor{l}\}$.
\newline
Given a straddle-tree $\JTstree{\zeta}$ we will also refer to the tree that $\JTstree{\zeta}$ is based on by $\JTstree{\zeta}$
\newline
Note that since a straddle-tree $\JTstree{\zeta}$ is a full binary tree with $|\zeta|$ leaves it has only $2|\zeta|-1$ vertices in total.
\newline
Note also that for some set $X\subseteq\JTnatup{n}$ the straddle-tree $\JTstree{\JTpower{X}}$ is a balanced oriented binary tree of height $|X|$ such that all internal vertices at depth $i$ are labeled with the $(i+1)$th smallest number in $X$.
\newline
\newline
Given a potential $\Phi$ and a straddle-set $\zeta\subseteq\JTpower{\JTunderlying{\Phi}}$ we define the \textit{info-tree}, $\JTitreetwo{\Phi}{\zeta}$ as the straddle-tree $\JTstree{\zeta}$ with a map $\psi$ from the leaves of $\JTstree{\zeta}$ into $\mathbb{R}^+$ such that, for any leaf $l$ we have $\JTvalue{l}:=\Phi(\JTleafset{l})$.
\newline
\newline
Any sparse format, $\JTsp{\Phi}{\zeta}$ is stored as the info-tree $\JTitreetwo{\Phi}{\zeta}$. Any potential $\Phi$ (not in sparse format) is stored as the info-tree $\JTitreetwo{\Phi}{\JTpower{\JTunderlying{\Phi}}}$ which we shall denote by $\JTitreeone{\Phi}$

\noindent\hfil\rule{1\textwidth}{.4pt}\hfil

\subsection{Searches}

In this section we describe the ways that the algorithms perform efficient, simultaneous searches over straddle-trees. There are two types of simultaneous search we describe: \textit{full-searches} which are used in ARCH-1 and \textit{synchronised-searches} which are used in ARCH-2. We start by defining a \textit{ghost-search} which is what the simultaneous searches are based on.

\noindent\hfil\rule{1\textwidth}{.4pt}\hfil
{\bf Ghost-Search:} Given a set $X\subseteq\JTnatup{n}$, a \textit{ghost-search} of $X$ is the following algorithm, which is split up into a sequence of steps called \textit{time-steps}:
\newline
We maintain a stack $\JTstack$ such that each element of $\JTstack$ is either $\JTnull$ or of the form $(e,f)$ where $e\in X$ and $f\in\{1,2, 3\}$. $\JTstack$ is initialised to contain $(\operatorname{min}(X), 1)$ as a single element. On each time-step we do the following:
\newline
If the top element of $\JTstack$ is $\JTnull$ then remove it from $\JTstack$ which completes the time-step. Else, the top element of $\JTstack$ is $(e,f)$ for some $e\in X$ and $f\in\{1,2,3\}$ so we have the following cases:
\begin{enumerate}
\item $f=1$: In this case we remove $(e,f)$ from $\JTstack$ and then place $(e,2)$ on the top of $\JTstack$. If $e=\operatorname{max}(X)$ then next place $\JTnull$ on the top of $\JTstack$. Else place $(\operatorname{min}\{b\in X: b>e\}, 1)$ on the top of $\JTstack$. This completes the time-step.
\item $f=2$: In this case we remove $(e,f)$ from $\JTstack$ and place $(e,3)$ on the top of $\JTstack$. If $e=\operatorname{max}(X)$ then next place $\JTnull$ on the top of $\JTstack$. Else place $(\operatorname{min}\{b\in X: b>e\}, 1)$ on the top of $\JTstack$. This completes the time-step.
\item $f=3$: In this case we remove $(e,f)$ from $\JTstack$ which completes the time-step.
\end{enumerate}
The algorithm terminates when $\JTstack$ becomes empty.

\noindent\hfil\rule{1\textwidth}{.4pt}\hfil
We call a time-step in a ghost-search a \textit{leaf-step} if and only if at the start of the time-step we have that the top element of the stack, $\JTstack$, is $\JTnull$.
\newline
\newline
Note that a ghost search of a set $X$ simulates a depth-first search (in which, given an internal vertex $v$, $\JTdescendants{\JTlchild{v}}$ is explored before $\JTdescendants{\JTrchild{v}}$) of $\JTstree{\JTpower{X}}$ without having to store the whole tree in the memory. The time-steps in which (at the start of the time-step) $(e,f)$ is on the top of the stack corresponds to the times when the depth first search is at some internal vertex $v$ in $\JTstree{\JTpower{X}}$ with $\JTlabel{v}=e$ and it is the $f$-th time that we have encountered $v$ throughout the depth first search. The leaf-steps correspond to the times when the depth-first search is at the leaves of $\JTstree{\JTpower{X}}$. Hence, by the bijection $\tau$ (from the leaves of $\JTstree{\JTpower{X}}$ into $\JTpower{X}$), we have a one to one correspondence between the leaf-steps and the sets in $\JTpower{X}$.

\noindent\hfil\rule{1\textwidth}{.4pt}\hfil
{\bf Full-Search:} Given a multi-set $\JTmultiset$ of straddle-trees (or info-trees, as every info-tree has an underlying straddle-tree), a \textit{full-search} of $\JTmultiset$ is the following algorithm:
\newline
We first define $X$ to be the set of all labels, $\JTlabel{v}$, of the internal vertices, $v$, of the trees in $\JTmultiset$. Note that $X$ can be found and ordered quickly. Note that before running the synchronised-search the set $\JTL$ is empty and the array $\mathcal{A}$ has the empty set for every element (see the start of this section). Let $\mathcal{R}$ be the set of roots of the trees in $\JTmultiset$. We initialise by, for every $r\in\mathcal{R}$, adding $r$ to the set $\JTarray{\JTlabel{r}}$. After this initialisation we perform a ghost search of $X$. Let $\JTstack$ be the stack in the ghost search. At the end of every time-step in the ghost search we do the following:

\begin{enumerate}
\item If, at the start of the time-step, the top element of $\JTstack$ is $\JTnull$ (i.e. the time-step is a leaf-step) then we do nothing.
\item If, at the start of the time-step, the top element of $\JTstack$ is $(e,1)$ for some $e\in X$ then for every $v\in\JTarray{e}$ we do the following: If $\JTlchild{v}$ is a leaf then we add $\JTlchild{v}$ to $\JTL$. Else we add $\JTlchild{v}$ to $\JTarray{\JTlabel{\JTlchild{v}}}$.
\item If, at the start of the time-step, the top element of $\JTstack$ is $(e,2)$ for some $e\in X$ then for every $v\in\JTarray{e}$ we do the following: We first remove $\JTlchild{v}$ from $\JTarray{\JTlabel{\JTlchild{v}}}$. If $\JTrchild{v}$ is a leaf then we add $\JTrchild{v}$ to $\JTL$. Else we add $\JTrchild{v}$ to $\JTarray{\JTlabel{\JTrchild{v}}}$.
\item If, at the start of the time-step, the top element of $\JTstack$ is $(e,3)$ for some $e\in X$ then for every $v\in\JTarray{e}$ we remove $\JTrchild{v}$ from $\JTarray{\JTlabel{\JTrchild{v}}}$.
\end{enumerate}
Once the ghost search terminates, we set $\JTarray{\operatorname{min}(X)}\leftarrow\emptyset$ and then the full-search terminates. Note that upon termination of the full-search we have that $\JTL=\emptyset$ and for all $e\in\JTnatup{n}$ we have $\JTarray{e}=\emptyset$ as required.

\noindent\hfil\rule{1\textwidth}{.4pt}\hfil
A full-search of $\{\JTstree{\JTpower{Y_i}}: i\in\JTnatup{a}\}$ essentially does the following: Recall from above that given $X=\bigcup_{i=1}^aY_i$ there is a one to one correspondence between the leaf-steps and sets in $\JTpower{X}$. Suppose we are at a leaf-step. Let $Z$ be the set in $\JTpower{X}$ corresponding to the leaf-step. Then at the start of the leaf-step we have that $\JTL$ is equal to the set of leaves $l$ in the trees $\{\JTstree{\JTpower{Y_i}}: i\in\JTnatup{a}\}$ such that, given $l$ is a leaf of $\JTstree{\JTpower{Y_j}}$, we have $\tau(l)=Z\cup Y_j$.
\newline
\newline
Note that if $\{y_i:i\in\JTnatup{c}\}=\bigcup_{i=1}^aY_i$ with $y_i<y_j$ for all $i,j\in\JTnatup{c}$ with $i<j$ then a full-search of $\{\JTstree{\JTpower{Y_i}}: i\in\JTnatup{a}\}$ takes a time of $\Theta\left(\sum_{i=1}^c|\{j\in\JTnatup{a}:y_i\in Y_j\}|2^i\right)$ and that this is bounded above by $\mathcal{O}\left(a\JTexp{c}\right)$

\noindent\hfil\rule{1\textwidth}{.4pt}\hfil
{\bf Syncronised-Search:} Given a multi-set $\JTmultiset$ of straddle-trees (or info-trees, as every info-tree has an underlying straddle-tree), a \textit{synchronised-search} of $\JTmultiset$ is the following algorithm:
\newline
We first define $X$ to be the set of all labels, $\JTlabel{v}$, of the internal vertices, $v$, of the trees in $\JTmultiset$. Note that $X$ can be found and ordered quickly. Note that before running the synchronised-search the set $\JTL$ is empty and the array $\mathcal{A}$ has the empty set for every element (see the start of this section). Let $\mathcal{R}$ be the set of roots of the trees in $\JTmultiset$. We initialise by, for every $r\in\mathcal{R}$, adding $r$ to the set $\JTarray{\JTlabel{r}}$. After this initialisation we perform a ghost search of $X$. Let $\JTstack$ be the stack in the ghost search. At the end of every time-step in the ghost search we do the following:
\begin{enumerate}
\item If, at the start of the time-step, the top element of $\JTstack$ is $\JTnull$ (i.e. the time-step is a leaf-step) then we set $\mathcal{L}\leftarrow\emptyset$.
\item If, at the start of the time-step, the top element of $\JTstack$ is $(e,1)$ for some $e\in X$ then for every $v\in\JTarray{e}$ we do the following: If $\JTlchild{v}$ is a leaf then we add $\JTlchild{v}$ to $\JTL$. Else we add $\JTlchild{v}$ to $\JTarray{\JTlabel{\JTlchild{v}}}$.
\item If, at the start of the time-step, the top element of $\JTstack$ is $(e,2)$ for some $e\in X$ then for every $v\in\JTarray{e}$ we do the following: If $\JTrchild{v}$ is a leaf then we add $\JTrchild{v}$ to $\JTL$. Else we add $\JTrchild{v}$ to $\JTarray{\JTlabel{\JTrchild{v}}}$.
\item If, at the start of the time-step, the top element of $\JTstack$ is $(e,3)$ for some $e\in X$ then we set $\JTarray{e}\leftarrow\emptyset$.
\end{enumerate}
Once the ghost search terminates, the synchronised-search also terminates. Note that upon termination of the synchronised-search we have that $\JTL=\emptyset$ and for all $e\in\JTnatup{n}$ we have $\JTarray{e}=\emptyset$ as required.

\noindent\hfil\rule{1\textwidth}{.4pt}\hfil

A synchronised-search of $\{\JTstree{\zeta_i}:i\in\JTnatup{a}\}$  essentially does the following: Recall from above that given $X=\bigcup_{i=1}^a\left(\bigcup\zeta_i\right)$ 
there is a one to one correspondence between the leaf-steps and sets in $\JTpower{X}$. Suppose we are at a leaf-step. Let $Z$ be the set in $\JTpower{X}$ corresponding to the leaf-step. Then at the start of the leaf-step we have that $\JTL$ is equal to the set of leaves $l$ in the trees $\{\JTstree{\zeta_i}:i\in\JTnatup{a}\}$ such that $\tau(l)=Z$.
\newline
\newline
Note that a synchronised-search of $\{\JTstree{\zeta_i}:i\in\JTnatup{a}\}$ takes a time of $\mathcal{O}\left(\JTexp{|\bigcup_{i=1}^a\left(\bigcup\zeta_i\right)|}+\sum_{i=1}^a|\zeta_i|\right)$. However, often much of the ghost-search underlying the synchronised-search is unnecessary, meaning that the additive factor of $\mathcal{O}\left(\JTexp{|\bigcup_{i=1}^a\left(\bigcup\zeta_i\right)|}\right)$ can be reduced.

\noindent\hfil\rule{1\textwidth}{.4pt}\hfil

\subsection{Implementing Algorithm \ref{JTalgorithm1}}
\label{JTalgo1imp}
 In this subsection let $C$, $k$, $D_i$, $\Upsilon_i$, $\Gamma$ and $\Psi_i$ be as in the description of Operation \ref{JToper1}. That is: $C$ is a set. $D_1, ..., D_k$ are subsets of $C$ with $\bigcup_{i=1}^kD_i=C$. $\Upsilon_i$ is a potential in $\JTpots{D_i}$. $\Gamma:=\prod_{i=1}^k\Upsilon_i$ and $\Psi_i:=\JTmarg{\Gamma}{D_i}$. The goal of Operation \ref{JToper1} is to compute $\Psi_i$ for every $i\in\JTnatup{k}$.

\noindent\hfil\rule{1\textwidth}{.4pt}\hfil
We first describe a simple implementation of Algorithm \ref{JTalgorithm1} that takes a time of $\Theta\left(k\JTexp{|C|}\right)$:
\newline
\newline
We have, as input, the set, $\{\JTitreeone{\Upsilon_i}: i\in\JTnatup{k}\}$. Initially, for every $i\in\JTnatup{k}$ we set $\mathfrak{A}_i\leftarrow\JTstree{\JTpower{D_i}}$ (which is copied from $\JTitreeone{\Upsilon_i}$) and set $\JTvalue{l}\leftarrow 0$ for every leaf $l$ of $\mathfrak{A}_i$. We then do a full-search of $\{\JTitreeone{\Upsilon_i}: i\in\JTnatup{k}\}\cup\{\mathfrak{A}_i: i\in\JTnatup{k}\}$. At the start of every leaf-step in the full-search we do the following:
\newline
\newline
Let $U$ be the set of leaves in $\JTL$ that are in the trees $\{\JTitreeone{\Upsilon_i}: i\in\JTnatup{k}\}$ and let $W$ be the set of leaves in $\JTL$ that are in the trees $\{\mathfrak{A}_i: i\in\JTnatup{k}\}$. Set $\alpha\leftarrow\sum_{l\in U}\JTvalue{l}$ and then set, for every $l\in W$, $\JTvalue{l}\leftarrow\JTvalue{l}+\alpha$.
\newline
\newline
After the full-search has terminated we have $\mathfrak{A}_i=\JTitreeone{\Psi_i}$ for every $i\in\JTnatup{k}$.

\noindent\hfil\rule{1\textwidth}{.4pt}\hfil
We now describe how, by caching various quantities, Algorithm \ref{JTalgorithm1} can, while retaining the low space complexity, be sped up to take a time of only $\Theta\left(\sum_{i\in\JTnatup{|C|}}|\{j\in\JTnatup{k}: y_i\in D_j\}|\JTexp{i}\right)$ where $y_i$ is the $i$-th least element of $C$:
\newline
\newline
We have, as input, the set, $\{\JTitreeone{\Upsilon_i}: i\in\JTnatup{k}\}$. Initially, for every $i\in\JTnatup{k}$ we set $\mathfrak{A}_i\leftarrow\JTstree{\JTpower{D_i}}$ (which is copied from $\JTitreeone{\Upsilon_i}$) and set $\JTvalue{l}\leftarrow 0$ for every leaf $l$ of $\mathfrak{A}_i$. We then do a full-search of $\{\JTitreeone{\Upsilon_i}: i\in\JTnatup{k}\}\cup\{\mathfrak{A}_i: i\in\JTnatup{k}\}$. For all $i\in\JTnatup{k}$ let $l_i^t$ (resp. $q_i^t$) be the leaf of $\JTitreeone{\Upsilon_i}$ (resp. $\mathfrak{A}_i$) that is in $\JTL$ at the start of the $t$-th leaf-step in the full-search. During the full-search, in addition to maintaining the variable $\alpha$ we also maintain a variable $\beta$ as well as, for every $i\in\JTnatup{k}$, a variable $ \delta_i$. At the start of the first leaf step we do the following:
\begin{enumerate}
\item For all $i\in\JTnatup{k}$ set $ \delta_i\leftarrow 0$
\item Set $\alpha\leftarrow\prod_{i\in\JTnatup{|C|}}\JTvalue{l_i^1}$
\item Set $\beta\leftarrow\alpha$
\end{enumerate}
 At the start of the $t$-th leaf-step, for $t>1$, we do the following:
\begin{enumerate}
\item For all $i\in\JTnatup{k}$ such that $q_i^t\neq q_i^{t-1}$ set $\JTvalue{q^{t-1}_i}\leftarrow\JTvalue{q^{t-1}_i}+\beta- \delta_i$.
\item For all $i\in\JTnatup{k}$ such that $q_i^t\neq q_i^{t-1}$ set $ \delta_i\leftarrow\beta$
\item Set $Q\leftarrow\{i\in\JTnatup{k}: l_i^t\neq l_i^{t-1}\}$
\item Set $\alpha\leftarrow\alpha\prod_{i\in Q}(\JTvalue{p_i^t}/\JTvalue{p_i^{t-1}})$
\item Set $\beta\leftarrow\beta+\alpha$
\end{enumerate}
Once the full-search has terminated we set $\JTvalue{q_i^{|C|}}\leftarrow\JTvalue{q_i^{|C|}}+\beta- \delta_i$ for every $i\in\JTnatup{k}$. We then have $\mathfrak{A}_i=\JTitreeone{\Psi_i}$ for every $i\in\JTnatup{k}$.

\noindent\hfil\rule{1\textwidth}{.4pt}\hfil
Note that, in the above implementation we can first re-order $C$ in order to minimise the time. However, re-ordering $C$ means that we must re-construct the info-trees $\Upsilon_i$ to be consistent with the new order which takes a time of $\Theta\left(|D_i|\JTexp{|D_i|}\right)$ for every $i\in\JTnatup{k}$.
\newline
\newline
Note also that since the above implementation involves division we should use MZC-numbers (see section \ref{JTsection7}) instead of real numbers to avoid division by zero.

\noindent\hfil\rule{1\textwidth}{.4pt}\hfil
\subsection{Implementing the Functions of ARCH-2}
\label{JTFuncImpSection}

In this subsection we describe the implementation of the functions described in Section \ref{JTfunctions}. The notation of this subsection is as in the description of the algorithms in Section \ref{JTfunctions}
\newline
\newline
First note that in Step \ref{JTf1step1} of Algorithm \ref{JTf1}, Step \ref{JTf3step1} of Algorithm \ref{JTf3} and Step \ref{JTf5step1} of Algorithm \ref{JTf5} we are asked to choose $x\in\JTunderlying{\Phi}$. In these times we will always choose to set $x\leftarrow\operatorname{min}(\JTunderlying{\Phi})$.

\noindent\hfil\rule{1\textwidth}{.4pt}\hfil
Steps \ref{JTf3step2}, \ref{JTf3step3} and \ref{JTf3step4} of Algorithm \ref{JTf3} are performed together as follows:\newline\newline First define $r$ to be the root of $\JTitreetwo{\JTdual{\Phi}}{\zeta}$. Note that the straddle-tree underlying $\JTdescendants{\JTlchild{r}}$ is $\JTstree{\vartheta}$ so we can copy this tree and set $\mathfrak{A}_{-}\leftarrow\JTstree{\vartheta}$ and $\mathfrak{A}_{+}\leftarrow\JTstree{\vartheta}$. We then do a synchronised-search of $\{\mathfrak{A}_{-},  \JTdescendants{\JTlchild{r}}, \JTdescendants{\JTrchild{r}}\}$ (resp. $\{\mathfrak{A}_{+},  \JTdescendants{\JTlchild{r}}, \JTdescendants{\JTrchild{r}}\}$~). At the start of every leaf-step we do the following:\newline If $\JTL$ doesn't contain a leaf of $\mathfrak{A}_{-}$ (resp. $\mathfrak{A}_{+}$) we do nothing. Else, we have two cases:
\begin{enumerate}
\item $\JTL$ contains a leaf of $\JTdescendants{\JTrchild{r}}$: In this case we have $\JTL=\{l_0, l_1, l_2\}$ where $l_0$ is a leaf of $\mathfrak{A}_{-}$ (resp. $\mathfrak{A}_{+}$), $l_1$ is a leaf of $\JTdescendants{\JTlchild{r}}$ and $l_2$ is a leaf of  $\JTdescendants{\JTrchild{r}}$. We set $\JTvalue{l_0}\leftarrow\JTvalue{l_1}$ (resp. $\JTvalue{l_0}\leftarrow\JTvalue{l_1}/\JTvalue{l_2}$~)
\item $\JTL$ doesn't contain a leaf of $\JTdescendants{\JTrchild{r}}$: In this case we have $\JTL=\{l_0, l_1\}$ where $l_0$ is a leaf of $\mathfrak{A}_{-}$ (resp. $\mathfrak{A}_{+}$) and $l_1$ is a leaf of $\JTdescendants{\JTlchild{r}}$. We set $\JTvalue{l_0}\leftarrow\JTvalue{l_1}$ (resp. $\JTvalue{l_0}\leftarrow\JTvalue{l_1}$~).
\end{enumerate}
After the synchronised searches we have $\mathfrak{A}_{-}=\JTitreetwo{\JTdual{\Phi_{-}}}{\vartheta}$ and $\mathfrak{A}_{+}=\JTitreetwo{\JTdual{\Phi_{+}}}{\vartheta}$
\newline
\newline
Step \ref{JTf1step2} of Algorithm \ref{JTf1} and Step \ref{JTf5step2} of Algorithm \ref{JTf5} are performed similarly (with $\JTitreeone{\Phi}$ or $\JTitreeone{\JTmdual{\Phi}}$ instead of $\JTitreetwo{\JTdual{\Phi}}{\zeta}$ and $\JTpower{\JTunderlying{\Phi}\setminus\{x\}}$ instead of $\vartheta$.)

\noindent\hfil\rule{1\textwidth}{.4pt}\hfil
Steps \ref{JTf3step6} and \ref{JTf3step7} of Algorithm \ref{JTf3} are performed together as follows:\newline First set $\mathfrak{A}\leftarrow\JTstree{\zeta}$ (copied from the input). Let $r$ be the root of $\mathfrak{A}$. Do a synchronised search of $\{\JTdescendants{\JTlchild{r}}, \JTitreetwo{\JTmdual{\Phi_{-}}}{\vartheta}, \JTitreetwo{\JTmdual{\Phi_{+}}}{\vartheta}\}$ (resp. $\{\JTdescendants{\JTrchild{r}}, \JTitreetwo{\JTmdual{\Phi_{-}}}{\vartheta}, \JTitreetwo{\JTmdual{\Phi_{+}}}{\vartheta}\}$~). At the start of every leaf-step we do the following:\newline If $\JTL$ doesn't contain a leaf of $\mathfrak{A}$ then we do nothing. Otherwise we have that $\JTL=\{l_0, l_1, l_2\}$ where $l_0$ is a leaf of $\mathfrak{A}$, $l_1$ is a leaf of $\JTitreetwo{\JTmdual{\Phi_{-}}}{\vartheta}$ and $l_2$ is a leaf $\JTitreetwo{\JTmdual{\Phi_{+}}}{\vartheta}$. In these cases we set $\JTvalue{l_0}\leftarrow\JTvalue{l_1}+\JTvalue{l_2}$ (resp. $\JTvalue{l_0}\leftarrow\JTvalue{l_2}$~). \newline After the synchronised searches we have $\mathfrak{A}=\JTitreetwo{\Phi}{\zeta}$
\newline
\newline
Step \ref{JTf1step4} of Algorithm \ref{JTf1} and Step \ref{JTf5step4} of Algorithm \ref{JTf5} are performed similarly (with $\JTpower{\JTunderlying{\Phi}}$ instead of $\zeta$).

\noindent\hfil\rule{1\textwidth}{.4pt}\hfil
Step \ref{JTf2step1} of Algorithm \ref{JTf2} requires us to construct $\JTstree{\zeta}$ where $\zeta:=\bigcup_{i=1}^k\{\JTpower{\JTunderlying{\JTdual{\Upsilon_i}}}\}$. We do this as follows:
\newline
We maintain (and grow) a subtree $\mathfrak{A}$ of $\JTstree{\zeta}$ initialised to contain the root, $r$, as a single vertex (with $\JTlabel{r}\leftarrow\operatorname{min}\bigcup\zeta$). At every point in the algorithm there is a single vertex of $\mathfrak{A}$ that is designated as the \textit{active vertex}. Also, at every point in the algorithm we say that the active vertex is either \textit{left-oriented} or \textit{right-oriented}. We initialise such that the vertex $r$ is the active vertex and is left-oriented. After this initialisation we do a synchronised-search of  $\{\JTitreeone{\JTdual{\Upsilon_i}}:i\in\JTnatup{k}\}$. Let $\JTstack$ be the stack used in the synchronised search. At the start of every time-step after the first we do the following:
\begin{enumerate}
\item If the top element of $\JTstack$ is $0$ (i.e. the time-step is a leaf-step) then do the following: If the active vertex $v$ is currently left-oriented then add a vertex $w$ to $\mathfrak{A}$ such that $w=\JTlchild{v}$. If the active vertex $v$ is currently right-oriented then add a vertex $w$ to $\mathfrak{A}$ such that $w=\JTrchild{v}$. It is the case that $w$ is a leaf of $\JTstree{\zeta}$.
\item If the top element of $\JTstack$ is $(i,1)$ for some $i\in\JTnatup{n}$ then do the following: If $\mathcal{A}(i)=\emptyset$ then do nothing. Otherwise, given that the active vertex is currently $v$ and is currently left-oriented (resp. right-oriented), we add a vertex $w$ to $\mathfrak{A}$ such that $w=\JTlchild{v}$ (resp. $w=\JTrchild{v}$) and set $\JTlabel{w}\leftarrow i$. We then make $w$ the active vertex and designate it as left-oriented.
\item If the top element of $\JTstack$ is $(i,2)$ for some $i\in\JTnatup{n}$ then do the following:  If $\mathcal{A}(i)=\emptyset$ then do nothing. Otherwise, given that the active vertex is currently $w$, we keep $w$ as the active vertex but now designate it as right-oriented.
\item If the top element of $\JTstack$ is $(i,3)$ for some $i\in\JTnatup{n}$ then do the following: If $\mathcal{A}(i)=\emptyset$ then do nothing. Otherwise, given $v$ is currently the active vertex, we let the parent of $v$ become the active vertex.
\end{enumerate}
After the synchronised-search is complete we have that $\mathfrak{A}=\JTstree{\zeta}$

\noindent\hfil\rule{1\textwidth}{.4pt}\hfil
After we have constructed $\JTstree{\zeta}$, Step \ref{JTf2step2} of Algorithm \ref{JTf2} is performed as follows: Set $\mathfrak{A}\leftarrow\JTstree{\zeta}$. Do a synchronised-search of $\{\mathfrak{A}\}\cup\{\JTitreeone{\JTdual{\Upsilon_i}}:i\in\JTnatup{k}\}$. Whenever we are at the start of a leaf-step such that there exists a leaf, $l$, of $\mathfrak{A}$ in $\JTL$ we set $\JTvalue{l}\leftarrow\prod_{s\in\JTL\setminus\{l\}}\JTvalue{s}$. After the synchronised search we have that  $\mathfrak{A}=\JTitreetwo{\JTdual{\Gamma}}{\zeta}$

\noindent\hfil\rule{1\textwidth}{.4pt}\hfil
Algorithm \ref{JTf4} is implemented as follows: For every $i\in\JTnatup{k}$ set $\mathfrak{A}_i\leftarrow\JTstree{\JTpower{D_i}}$. Do a synchronised-search of $\{\JTitreetwo{\JTmdual{\Gamma}}{\zeta}\}\cup\{\mathfrak{A}_i:i\in\JTnatup{k}\}$. Whenever we are at the start of a leaf-step such that there exists a leaf, $l$, of $\JTitreetwo{\JTmdual{\Gamma}}{\zeta}$ in $\JTL$ we set, for every leaf $s\in\JTL\setminus\{l\}$, $\JTvalue{s}\leftarrow\JTvalue{l}$. After the synchronised-search we have that $\mathfrak{A}_i=\JTitreeone{\JTmdual{\Psi_i}}$

\noindent\hfil\rule{1\textwidth}{.4pt}\hfil
\section{Conclusion}
In this paper we reviewed the classic architectures of Shafer-Shenoy and Hugin propagation and then introduced two new junction tree architectures: The first, ARCH-1, has the speed (up to a constant factor) of Hugin propagation and the low space requirements of Shafer-Shenoy propagation. The second, ARCH-2, has space and time complexities (almost) at least as good as ARCH-1 and is significantly faster when we have large vertices of high degree in the junction tree. We first gave a high-level overview of the new architectures and then details of there efficient implementations.

\noindent\hfil\rule{1\textwidth}{.4pt}\hfil

\end{document}